\newcommand{\myAppendix}[1]{#1}
\newtheorem{definition}{Definition}
\newtheorem{lemma}{Lemma}
\newtheorem{thm}{Theorem}
\newcommand{\identityF}{\mathbb{I}}
\newcommand{\natNumbers}{\mathbb{N}} 
\newcommand{\horizon}{K}
\newcommand{\statePrior}{p_{s_0}}
\newcommand{\modelPrior}{p_\mathcal{D}}
\newcommand{\domainSimulatorPrior}{\mathbb{M}}
\newcommand{\rewardFunction}{\mathcal{R}}
\newcommand{\stateSpace}{\mathbb{S}}
\newcommand{\observationSpace}{\mathbb{O}}
\newcommand{\actionSpace}{\mathbb{A}}
\newcommand{\spaceOfDynamics}{\mathfrak{D}}
\newcommand{\att}{_t}
\newcommand{\attplus}{_{t+1}}
\newcommand{\attmin}{_{t-1}}
\newcommand{\seq}[1]{\vec{#1}}
\newcommand{\at}{\seq{a}\attmin}
\newcommand{\ot}{\seq{o}\att}
\newcommand{\st}{\seq{s}\att}
\newcommand{\stplus}{\seq{s}\attplus}
\newcommand{\stmin}{\seq{s}\attmin}
\newcommand{\hist}{h\att}
\newcommand{\histplus}{h\attplus}
\newcommand{\Hist}{H\att}
\newcommand{\Histplus}{H\attplus}
\newcommand{\thetann}{w}
\newcommand{\Thetann}{W}
\newcommand{\ptheta}{\theta}
\newcommand{\pTheta}{\Theta}
\newcommand{\pthetann}{\thetann}
\newcommand{\pThetann}{\Thetann}
\newcommand{\belieft}{p(\mathcal{D}, s_t|\hist)}
\newcommand{\belieftplus}{p(\mathcal{D}, s_{t+1}|\histplus)}
\newcommand{\stateDistrt}{p(\st | \hist)}
\newcommand{\stateDistrtplus}{p(\stplus | \histplus)}
\newcommand{\pparamH}[1]{#1_{\Hist}}
\newcommand{\pparamHplus}[1]{#1_{\Histplus}}
\newcommand{\pthetaH}{\pparamH{\ptheta}}
\newcommand{\pthetaHplus}{\pparamHplus{\ptheta}}
\newcommand{\pthetaT}{\ptheta\att}
\newcommand{\pthetaTplus}{\ptheta\attplus}
\newcommand{\modelDistrt}{p(\mathcal{D}|\Hist)}
\newcommand{\modelDistrtplus}{p(\mathcal{D}|\Histplus)}
\newcommand{\modelDistrthetat}{p(\mathcal{D};\pthetaH)}
\newcommand{\pDynD}{\mathcal{D}(s_{t+1}, o_{t+1}|s_t, a_t)}
\newcommand{\pDynHist}{p(s_{t+1},o_{t+1}|\Hist,a_t)}
\newcommand{\paramUpdate}[1]{\mathcal{U}(#1,s_t,a_t,s_{t+1},o_{t+1})}
\newcommand{\someprior}[1]{p_{#1_0}}
\newcommand{\gbaprior}{\someprior{\bar{s}}}
\newcommand{\pthetaprior}{\ptheta_0}
\newcommand{\pthetannprior}{\pthetann_0}
\newcommand{\algrule}{\par\vskip.2\baselineskip\hrule height .01pt\par\vskip.2\baselineskip}
\newcommand{\algorithmComment}[1]{// \textit{#1}}
\newcommand{\hmdp}{\text{Hist-MDP}} 
\newcommand{\pomdpFamilies}{\mathcal{F}}
\newcommand{\bporl}{\mathcal{M}_{\text{BPORL}}}
\newcommand{\gbapomdp}{\mathcal{M}_{\text{GBA-POMDP}}}
\newcommand{\baddr}{\mathcal{M}_{\text{BADDr}}}
\title[BADDr]{BADDr: Bayes-Adaptive Deep Dropout RL for POMDPs}
\author{Sammie Katt \hfill Hai Nguyen}
\affiliation{
    \institution{Northeastern University}
    \city{Boston}
    \country{USA}}
\email{{katt.s,nguyen.hai1}@northeastern.edu}
\author{Frans A. Oliehoek}
\affiliation{
    \institution{Delft University of Technology}
    \city{Delft}
    \country{Netherlands}}
\email{f.a.oliehoek@tudelft.nl}
\author{Christopher Amato}
\affiliation{
    \institution{Northeastern University}
    \city{Boston}
    \country{USA}}
\email{c.amato@northeastern.edu}
\begin{abstract}
While reinforcement learning (RL) has made great advances in scalability,
exploration and partial observability are still active research topics.
In contrast, Bayesian RL (BRL) provides a principled answer to both state
estimation and the exploration-exploitation trade-off, but struggles to scale.
To tackle this challenge, BRL frameworks with various prior assumptions have
been proposed, with varied success.
This work presents a representation-agnostic formulation of BRL under partially
observability, unifying the previous models under one theoretical umbrella. To
demonstrate its practical significance we also propose a novel derivation,
Bayes-Adaptive Deep Dropout rl (BADDr), based on dropout networks. Under this
parameterization, in contrast to previous work, the belief over the state and
dynamics is a more scalable inference problem. We choose actions through
Monte-Carlo tree search and empirically show that our method is competitive
with state-of-the-art BRL methods on small domains while being able to solve
much larger ones.

\end{abstract}
\keywords{Bayesian RL; POMDP; MCTS}
\begin{document}

\pagestyle{fancy}
\fancyhead{}

\maketitle

{\fontsize{8pt}{8pt} \selectfont \textbf{ACM Reference Format:} \\ Sammie Katt, Hai Nguyen, Frans A. Oliehoek, and Christopher Amato. 2022. BADDr: Bayes-Adaptive Deep Dropout RL for POMDPs. In {\it Proc. of the 21st International Conference on Autonomous Agents and Multiagent Systems (AAMAS 2022), Online, May 9--13, 2022,} IFAAMAS, 9 pages. }

\section{Introduction: Bayesian RL}
Reinforcement learning~\cite{sutton_introduction_1998} with observable states
has seen impressive advances with the breakthrough of deep
RL~\cite{mnih_playing_2013,silver_mastering_2016,van_hasselt_deep_2016}.
These methods have been extended to partially observable
environments~\cite{kaelbling_planning_1998} with recurrent
layers~\cite{hausknecht_deep_2015,wierstra_solving_2007} and much attention has
been paid to encode history into these
models~\cite{karkus_integrating_2018,jonschkowski_differentiable_2018,igl_deep_2018,ma_discriminative_2020}.

Although successful for some domains, this progress has largely been driven by
function approximation and fundamental questions are still left unanswered.
The trade-off between between exploiting current knowledge and exploring for
new information, is one such
example~\cite{bellemare_unifying_2016,azizzadenesheli_efficient_2018,osband_randomized_2018}.
Another is how to encode domain knowledge, often abundantly available and
crucial for most real world problems (simulators, experts etc.). Although
research is actively trying to solve these issues, applications of RL to a
broad range of applications is limited without reliable solutions.

Interestingly, Bayesian RL (BRL) suffers from opposite challenges. BRL methods
explicitly assume priors over, and maintain uncertainty estimates of, variables
of interest. As a result, BRL is well-equipped to exploit expert knowledge and
can intelligently explore to reduce uncertainty over (only the) important
unknowns.
Unfortunately these properties come at a price, and BRL is traditionally known
to struggle scaling to larger problems. For example, the
BA-POMDP~\cite{ross_bayesian_2011,katt_learning_2017} is
a state-of-the-art Bayesian solution for RL in partially observable
environments, but is limited to tabular domains. While factored models can
help~\cite{katt_bayesian_2019}, such representations are not appropriate or may
still suffer from scalability issues.

This work combines the principled Bayesian perspective with the scalability of
neural networks. We first generalize previous
work~\cite{ross_bayesian_2011,ghavamzadeh_bayesian_2016,katt_bayesian_2019}
with a Bayesian partial observable RL formulation \emph{without prior
assumptions on parametrization}. In particular, we define the general BA-POMDP
(GBA-POMDP) which, given a (parameterized) prior and update function, converts
the BRL problem into a POMDP with known dynamics. We show that, when the update
function satisfies an intuitive criterion, this conversion is lossless and a
planning solution to the GBA-POMDP results in optimal behavior for the original
learning problem with respect to the prior.
To show its practical significance we derive Bayes-adaptive deep dropout RL
(BADDr) from the GBA-POMDP\@. BADDr utilizes dropout networks as approximate
Bayesian estimates~\cite{gal_dropout_2016}, allowing for an expressive and
scalable approach. Additionally, the prior is straightforward to generate and
requires fewer assumptions than previous BRL methods. The resulting planning
problem is solved with new
MCTS~\cite{browne_survey_2012,silver_monte-carlo_2010} and particle
filtering~\cite{thrun_monte_2000} algorithms.

We demonstrate BADDr is not only competitive with state-of-the-art BRL methods
in traditional domains, but solves domains that are infeasible for said
baselines. We also demonstrate the sample efficiency of BRL in a comparison
with the (non-Bayesian) DPFRL~\cite{ma_discriminative_2020} and provide
ablation studies and belief analysis.

\section{Preliminaries}\label{sec:background}
\paragraph{Partially observable MDPs}
Sequential decision making with hidden state is typically modeled as a
partially observable Markov decision process
(POMDP)~\cite{kaelbling_planning_1998}, which is described by the tuple $(
\mathbb{S, A, O}, \mathcal{D}, \rewardFunction, \gamma, \horizon,
\statePrior)$.
Here $\mathbb{S,A}$ and $\mathbb{O}$ are respectively the discrete state,
action and observation space. $\horizon \in \natNumbers$ is the horizon
(length) of the problem, while $\gamma \in [0,1]$ is the discount factor.
The dynamics are described by $\mathcal{D}$: $(\mathbb{S} \times \mathbb{A})
\rightarrow \Delta (\mathbb{S} \times \mathbb{O})$, which in practice separates
into a transition and observation model.
The reward function $\rewardFunction$: $(\mathbb{S} \times \mathbb{A} \times
\mathbb{S}) \rightarrow \mathbb{R}$ maps transitions to a reward.
Lastly, the prior $\statePrior \in \Delta \mathbb{S}$ dictates the distribution
over the initial state.

At every time step $t$ the agent takes an action $a$ and causes a transition to
a new hidden state $s'$, which results in some observation $o$ and reward $r$.
We assume the objective is to maximize the discounted accumulated reward
$\sum_{t} \gamma^t r_t$. To do so, the agent considers the observable history
$\hist=(\at,\ot)=(a_{0},o_{1},a_{1},\dots a_{t-1},o_{t})$. Because this grows
indefinitely, it is instead common to use the belief $b \in \mathbb{B}$:
$\Delta \mathbb{S}$, a distribution over the current state and a sufficient
statistic~\cite{kaelbling_planning_1998}. The \emph{belief update}, $\tau$:
$(\mathbb{B} \times \mathbb{A} \times \mathbb{O}) \rightarrow \mathbb{B}$,
gives the new belief after an action and observation, and follows the Bayes'
rule:
\begin{equation}\label{eq:pomdp-belief-update}
    b'(s') =
    \tau(b,a,o)(s') \propto \sum_s \mathcal{D}(s',o|s,a)b(s)
\end{equation}
A policy then maps beliefs to action probabilities $\pi$: $\mathbb{B}
\rightarrow \Delta \mathbb{A}$.

In this work we will be concerned with solving large POMDPs in which exact
belief updates and planning are no longer feasible.
For efficient belief tracking we use particle filters~\cite{thrun_monte_2000},
which approximate the belief with a collection of `particles' (in this case
states).
For action selection we turn to online planners, since they can spend resources
on only the beliefs that are relevant. In particular, this work builds on
POMCP~\cite{silver_monte-carlo_2010}, an extension of Monte-Carlo tree search
(MCTS)~\cite{kocsis_bandit_2006,browne_survey_2012} to POMDPs, that is
compatible with particle filtering. More details follow
in the method section (\cref{ssec:baddr-solution}).

\paragraph{Dropout neural networks}

Dropout~\cite{li_dropout_2017,gal_dropout_2016,srivastava_dropout_2014,hinton_improving_2012}
is a stochastic regularization technique that samples networks by randomly
dropping nodes (setting their output to zero).
A nice property is that it can be interpreted as performing approximate
Bayesian inference. Specifically Gal and Ghahramani~\citep{gal_dropout_2016}
show that applying dropout to a (fully connected) layer $i$ means that its
$K_i$ inputs $j$ are active (or dropped) according to Bernoulli variables
$z_{i,j}$. This means that a (random) effective weight matrix for that layer
randomly drops columns: it can be written as $\tilde{\mathbf{W}}_i =
\mathbf{W}_i \cdot \textrm{diag} ( [z_{i,j}]_{j=1}^{K_i})$. We define $w$ as
the stacking of all such (effective) layer weights, and $\tilde{w} \sim
dropout(\cdot | w)$ as the distribution induced by dropout. Gal and Ghahramani
show that the training objective of a dropout network minimizes the
Kullback-Leibler divergence between $dropout(w)$ and the posterior over weights
of a deep Gaussian process (GP~\cite{Damianou13AISTATS}), a very general
powerful model for maintaining distributions over functions.
A result of this is a relatively cheap method to compute posterior predictions
using Monte-Carlo estimates:
\begin{equation}\label{eq:dropout-MC-estimate}
    p(y|x) \approx \frac{1}{N} \sum_{n=0}^N p(y|x;\tilde w_n)
\end{equation}

\section{Bayesian Partially Observable RL}\label{sec:brl-formulation}
The strength of Bayesian RL is that it can exploit prior (expert) knowledge in
the form of a probabilistic prior to better direct exploration and thus reduce
sample complexity. However, to operationalize this idea, previous approaches
make limiting assumptions on the form of the prior (such as assuming it is
given as a collection of Dirichlet distributions), which limits their
scalability.

Here we present the Bayesian perspective of the partially observable RL (PORL)
problem without such assumptions. We first formalize precisely what we mean
with PORL in~\cref{ssec:PORLdef}. \Cref{ssec:brl} then describes the process of
Bayesian belief tracking for PORL in terms of general densities over dynamics.
This makes explicit how the belief can be interpreted as a weighted mixture of
posteriors given the full history (something which we will exploit in
\cref{sec:baddr}). Subsequently, in~\cref{ssec:parameterization} we state a
\emph{parameter update criterion} that provides sufficient conditions for a
parameterized representation to give an exact solution to the original PORL
problem. Finally,~\cref{ssec:brl-formulation:gba-pomdp} then describes how we
can cast the PORL problem as a planning problem using arbitrary parameterized
distributions in the proposed general BA-POMDP (GBA-POMDP).

The GBA-POMDP naturally generalizes over previous realizations
(e.g.~\cite{katt_bayesian_2019}), but also support low-dimensional or
hierarchical representations of beliefs. We note that in some cases, such more
compact belief representation have been used in
experiments~\cite{ross_bayesian_2011} even though they were not captured by the
theory presented in the paper. Our paper in that way provides the, thus far
still missing, theoretical underpinning for these experiment. Later
in~\cref{sec:baddr} we will show its practical significance, where we derive a
neural network based realization that is capable of modeling larger problems
than current state-of-the-art Bayesian methods can.

\subsection{Bayesian PORL Definitions}\label{ssec:PORLdef}

Here we formalize the problem of partially observable RL (PORL) and the
Bayesian perspective on it. In PORL the goal is to maximize some metric while
being uncertain about which POMDP we act in:

\begin{definition}[Family of POMDPs]\label{def:pomdp-families}
    Given a set of dynamics functions $\spaceOfDynamics$, we say that
    $\pomdpFamilies=\left\{
        (\mathbb{S,A,O},\mathcal{D},\rewardFunction,\gamma,\horizon,\statePrior)\mid\mathcal{D}\in\spaceOfDynamics\right\}
    $
    is a \textbf{family of POMDPs}.
\end{definition}

Note that we assume that only the dynamics function is unknown. In our
formulation, the reward function is assumed to be known (even though that can
be generalized, e.g., by absorbing the reward in the state), as well as the
representation of hidden states. We assume that the goal is to maximize the
expected cumulative (discounted) reward over a finite horizon, but other
optimality criteria can be considered.

We now consider Bayesian learning in such families when a prior $\modelPrior$ over
the (otherwise unknown) dynamics is available:

\begin{definition}[BPORL: Bayesian partial observable RL]\label{def:bporl}

    A BPORL model $\bporl=(\pomdpFamilies, \modelPrior)$ is a family of POMDPs
    $\mathcal{F}$ together with a prior over dynamics functions $p_{\mathcal
    D}\in \Delta \spaceOfDynamics$.

\end{definition}

\subsection{Belief Tracking in Bayesian PORL}\label{ssec:brl}

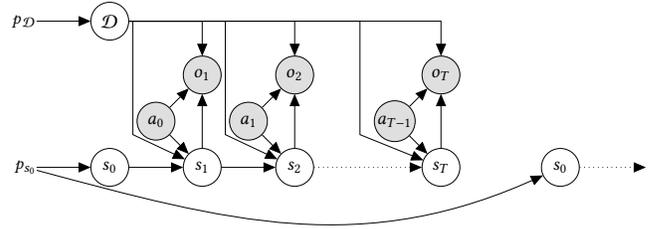
\begin{figure}
    \centering
    \resizebox{\linewidth}{!}{{\begin{tikzpicture}[every node/.style={scale=1}]

    \node[latent] (s00) {$s_0$};
    \node[latent,right=of s00] (s10) {$s_1$};

    \node[latent,right=of s10] (s20) {$s_2$};

    \node[latent,right=2cm of s20] (sT0) {$s_T$};

    \node[latent,right=1.5cm of sT0] (s01) {$s_0$};
    \node[right=of s01] (send) {};

    \node[latent,above=2cm of s00] (D) {$\mathcal{D}$};
    \node[const,left=of D] (pD) {$p_\mathcal{D}$};
    \draw[->] (pD) -- (D);

    \node[const,left=of s00] (ps0) {$p_{s_0}$};
    \edge {ps0} {s00};
    \draw[->] (ps0) to [out=-15, in=205] (s01);

    \node[obs,above=of s10] (o10) {$o_1$};
    \node[obs,above=of s20] (o20) {$o_2$};
    \node[obs,above=of sT0] (oT0) {$o_T$};

    \node[obs,xshift=-.5cm] (a00) at ($(s10.west)!.5!(o10.west)$) {$a_0$};
    \node[obs,xshift=-.5cm] (a10) at ($(s20.west)!.5!(o20.west)$) {$a_{1}$};
    \node[obs,xshift=-.5cm] (aT0) at ($(sT0.west)!.5!(oT0.west)$) {$a_{T-1}$};

    \draw[->] (D.east) -| ($(s00.north)!.5!(a00)$) -- (s10);
    \draw[->] (D.east) -| ($(s10.north)!.5!(a10)$) -- (s20);
    \draw[->] (D.east) -| ($(s10.north)!.5!(a10) + (2.5cm,0)$) -- (sT0);

    \edge {a00} {s10};
    \edge {a10} {s20};
    \edge {aT0} {sT0};

    \edge {s00} {s10};
    \edge {s10} {s20};

    \draw[->] (D) -| (o10);
    \draw[->] (D) -| (o20);
    \draw[->] (D) -| (oT0);

    \edge {s10} {o10};
    \edge {s20} {o20};
    \edge {sT0} {oT0};

    \edge {a00} {o10};
    \edge {a10} {o20};
    \edge {aT0} {oT0};

    \draw[dotted,->] (s01) -- (send.east);
    \draw[dotted,->] (s20) -- (sT0);

\end{tikzpicture}}}
    \caption{Model of the BRL inference problem. The actions $a$ and
    observations $o$ in gray are observable, which means the policy is
    dependent on them, while the states $s$ and dynamics $\mathcal{D}$ are
    hidden. The priors $p_\mathcal{D}$ and $p_{s_0}$ represent the a-priori
    knowledge. Time is indicated with subscripts and progresses to the right.}
    \label{fig:brl-graph}
    \Description{A graphical representation of the BRL inference problem.
        Describes both POMDP elements such as states, observations and actions, as
        well as the (unknown) dynamics and the Bayesian priors over it.}
\end{figure}

Here we first derive the equations that describe belief tracking in a BPORL, not
making any assumption on parametrization of these beliefs, but instead assuming
arbitrary densities.
The data available to the agent is the observable history $\hist$, the previous
actions and observations, as well as the priors $\modelPrior$ and $\statePrior$
(\cref{fig:brl-graph}), which are implicitly assumed throughout and omitted in
the equations. The quantity of interest is the belief over the current POMDP
state and dynamics $\belieft$.
We consider how to compute the next belief $\belieftplus$ from a current
$\belieft$ given a new action $a_t$ and observation $o_{t+1}$. Note the
similarities to the POMDP belief update~\cref{eq:pomdp-belief-update} and how
it unrolls over time steps.
\begin{align}
    \belieftplus      & \propto \sum_{s_t} \pDynD \belieft \label{eq:weight-belief-update} \\
    (\text{unroll t}) & \propto \modelPrior(\mathcal{D}) \sum_{\st}
    \statePrior(s_0) \prod_{i=0}^t \mathcal{D}(s_{i+1},o_{i+1}|s_i,a_i)
    \label{eq:unrolled-belief-update}
\end{align}
This assigns more weights to models that are more probable under the evidence
and is fine in general from the Bayesian perspective. Unfortunately, the joint
space of models and states is too large to do exact inference on and, in
practice, we need to resort to approximations and consider only a limited
number of models. The ``true'' model $\mathcal{D}$ will typically not be part
of the tracked models, and merely updating their weights as
in~\cref{eq:unrolled-belief-update} is inadequate: it will lead to degenerate
beliefs where most weights approach zero.
To address this, we rewrite~\cref{eq:weight-belief-update} such that it gives a
different perspective, one which updates the models considered by the belief,
and this opens the possibility for combinations with machine learning methods.
We denote the history including a state sequence $\st$ with $\Hist = (\st,
\hist) = (s_0,a_0,s_1,o_1 \dots a_{t-1},s_t,o_t)$ and apply the chain rule to
formulate the belief as a weighted mixture of model posteriors (one for each
state sequence $\st$):
\begin{equation}\label{eq:mixture-brl}
    \belieftplus = \sum_{\st} \underbrace{\stateDistrtplus}_{\text{weight}}
    \underbrace{\modelDistrtplus}_{\text{component}}
\end{equation}
The advantage is that it includes the term $\modelDistrt$ that can be
interpreted as a posterior over the model given all the data $\Hist$. In the
supplements we show that this belief can be computed recursively:
\begin{gather}
    (\ref{eq:mixture-brl}) 
    \propto
    \sum_{\st} \underbrace{\stateDistrt}_{\text{prior weight}}
    \underbrace{\pDynHist}_{\text{transition likelihood}}
    \underbrace{\modelDistrtplus}_{\text{component}} \label{eq:brl-recursive}
\end{gather}

Here the \textit{prior weight} $\stateDistrt$ is the weight of one of the
components in the belief at the previous time step (\cref{eq:mixture-brl}).
The \textit{transition likelihood} is not trivial and is an expectation over
the dynamics:
\begin{equation}\label{eq:weight-update}
    \pDynHist = \int_\mathcal{D} p(\mathcal{D}|\Hist) \pDynD
\end{equation}

Lastly, the \textit{component} $\modelDistrtplus$ is the posterior over the
model given all observable data plus a hypothetical state sequence. This term,
and its computation, is explained in the next section.

\subsection{Parameterized Representations}\label{ssec:parameterization}

The last section described the belief in BPORL as a mixture where each
component itself is a distribution over the dynamics (\cref{eq:mixture-brl}).
It also provided the corresponding belief update (\cref{eq:brl-recursive}), but
omitted the computation of the components. Here we show how a posterior
$\modelDistrtplus$ can be derived from a prior component $\modelDistrt$.

In order to make the bridge to practical implementations, we consider the
setting where these distributions are parameterized by $\ptheta \in \pTheta$,
and denote the induced distribution as $\modelDistrthetat$.
Thus we are now interested in a \emph{parameter update function} that updates
parameters given new transitions: $\mathcal{U}$: $(\pTheta \times \mathbb{S
\times A \times S \times O}) \rightarrow \pTheta$.
This of course raises the question of how such updates can capture the true
evaluation of the posterior $\modelDistrt$. To address this, we formalize a
\emph{parameter update criterion}, which can be used to demonstrate that these
dynamics are sufficiently captured.

\begin{definition}[Parameter update
        criterion]\label{def:parameter-update-criterion}

    We say that the \emph{parameter update criterion} holds \textbf{if it is
        true that}, whenever for some $t$ we have that all
    $\Hist=(s_0,a_0,s_1,o_1,a_1,\dots,a_{t-1},s_t,o_t), a_t$ induce the same
    dynamics as their summary $(\pthetaH,s_t)$, for all $s_{t+1},o_{t+1}$
    \begin{equation}\label{eq:parameter-update-criterion}
        p(s_{t+1}, o_{t+1}|\Hist, a_t) = p(s_{t+1}, o_{t+1}|\pthetaH, s_{t},
        a_t)
    \end{equation}
    \textbf{then}, for the corresponding transitions, and their induced
    $\Histplus = (\Hist,a_t,s_{t+1},o_{t+1})$ and $\pthetaHplus =
        \paramUpdate{\pthetaH}$, the next stage dynamics are also equal, for all
    $s_{t+2},o_{t+2}$:
    \begin{equation*}
        p(s_{t+2},o_{t+2}|\Histplus,s_{t+1},a_{t+1})=p(s_{t+2},o_{t+2}|\pthetaHplus,s_{t+1},a_{t+1}).
    \end{equation*}

\end{definition}

From this, we derive:


\begin{lemma}\label{lem:parameter-update-criterion}

    If the parameter update criterion holds, and the initial parameter matches
    the prior over models:
    \begin{equation}\label{eq:initial-param-matches-prior}
        \int_{\mathcal D}\modelPrior(\mathcal D)\mathcal D(s_1,o_1|s_0,a_0) =
        p(s_1,o_1|\ptheta_0,s_0,a_0)
    \end{equation}
    then we have that for all $t,\Hist,a_t,s_{t+1},o_{t+1}$
    \begin{equation*}
        p(s_{t+1},o_{t+1}|\Hist,a_t)=p(s_{t+1},o_{t+1}|\pthetaH,s_t,a_t)
    \end{equation*}

\end{lemma}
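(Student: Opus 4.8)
The plan is to prove the identity by induction on $t$, where the object of the induction is the \emph{universally quantified} claim ``for every length-$t$ history $\Hist$ and all $a_t,s_{t+1},o_{t+1}$, $p(s_{t+1},o_{t+1}|\Hist,a_t)=p(s_{t+1},o_{t+1}|\pthetaH,s_t,a_t)$''. Carrying the quantifier over \emph{all} histories of a given length inside the induction (rather than fixing a single history) is the one point that needs care: the antecedent of the parameter update criterion in \cref{def:parameter-update-criterion} is itself a statement about all $\Hist$ of a given length, so only a suitably quantified induction hypothesis lets us invoke it.

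\emph{Base case} ($t=0$). Here $H_0=(s_0)$, so its summary is the initial parameter $\pthetaprior$, and, by the integral form of the predictive dynamics (\cref{eq:weight-update}), $p(s_1,o_1|H_0,a_0)=\int_{\mathcal D}p(\mathcal D|s_0)\,\mathcal D(s_1,o_1|s_0,a_0)$. Since $\mathcal D$ and $s_0$ are a-priori independent (evident from \cref{fig:brl-graph}), $p(\mathcal D|s_0)=\modelPrior(\mathcal D)$, so this equals $\int_{\mathcal D}\modelPrior(\mathcal D)\,\mathcal D(s_1,o_1|s_0,a_0)$, which by assumption~\cref{eq:initial-param-matches-prior} is exactly $p(s_1,o_1|\pthetaprior,s_0,a_0)$. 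This establishes the claim at $t=0$.

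\emph{Inductive step.} Assume the claim holds at time $t$ for all length-$t$ histories. Fix an arbitrary length-$(t+1)$ history; by construction it has the form $\Histplus=(\Hist,a_t,s_{t+1},o_{t+1})$ for some length-$t$ history $\Hist$ and transition $(a_t,s_{t+1},o_{t+1})$, with attached parameter $\pthetaHplus=\paramUpdate{\pthetaH}$. The induction hypothesis is precisely the antecedent demanded by \cref{def:parameter-update-criterion} --- that every length-$t$ history induces the same one-step dynamics as its summary $(\pthetaH,s_t)$ --- so the parameter update criterion applies and yields $p(s_{t+2},o_{t+2}|\Histplus,s_{t+1},a_{t+1})=p(s_{t+2},o_{t+2}|\pthetaHplus,s_{t+1},a_{t+1})$ for all $s_{t+2},o_{t+2}$. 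Because $s_{t+1}$ is already a component of $\Histplus$, conditioning on it a second time is redundant, so the left-hand side equals $p(s_{t+2},o_{t+2}|\Histplus,a_{t+1})$; after reindexing $t+1\mapsto t$ this is exactly the claim at time $t+1$ for the chosen history. Since the history was arbitrary, the induction closes and the lemma follows.

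\emph{Remark.} None of the mixture/weight machinery of \cref{ssec:brl} (the state-sequence weights in \cref{eq:mixture-brl,eq:brl-recursive}) is needed here: the argument only concerns the one-step predictive dynamics, and everything reduces to chaining \cref{eq:initial-param-matches-prior} with repeated applications of \cref{def:parameter-update-criterion}.
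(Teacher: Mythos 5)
Your proof is correct and follows essentially the same route as the paper's own argument: induction on $t$, with the base case given by the prior-matching condition~\cref{eq:initial-param-matches-prior} together with the integral form of the predictive dynamics, and the inductive step obtained by invoking the parameter update criterion with the (suitably quantified) induction hypothesis as its antecedent. The extra care you take --- carrying the universal quantifier over all length-$t$ histories, justifying $p(\mathcal D|s_0)=\modelPrior(\mathcal D)$, and noting the redundant conditioning on $s_{t+1}$ --- only spells out details the paper's terse proof leaves implicit.
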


Thus, if the parameterization $\ptheta$ can represent the prior over the
dynamics $\modelPrior$ and the parameter update criterion holds, then we can
correctly represent and update the true posterior distribution.


\begin{proof}
    The proof follows directly from induction. Base case for $t=0$, in which
    case for $H_0=(s_0)$, holds due to the
    condition~\cref{eq:initial-param-matches-prior}
    \begin{align*}
        p(s_1,o_1|\ptheta_0,s_0,a_0)
        &\stackrel{(\cref{eq:initial-param-matches-prior})}{=} \int_{\mathcal
        D}p(\mathcal D|H_0)\mathcal D(s_1,o_1|s_0,a_0) \\
        &\stackrel{(\cref{eq:weight-update})}{=} p(s_{t+1},o_{t+1}|H_0,a_t) \end{align*}
    At this point we apply the update criterion as our induction hypothesis and
    conclude that the posteriors are identical for all $\Hist$.
\end{proof}

Hence, an update $\mathcal U$ that satisfies the criterion computes
(parameterized) posteriors $\modelDistrtplus$  from a prior component
$\modelDistrt$.

The parameter update criterion captures for instance the updating of statistics
for conjugate distributions, such as Dirichlet-multinomial distributions, but
also situations where the uncertainty about the dynamics functions is captured
by a low-dimensional statistic or where a more general a hierarchical
representation of the dynamics function is appropriate. Approximate inference
methods can also be used to construct parametrizations (of which BADDr will be
one example) and Monte-Carlo simulation can be used if sufficient compute power
is available.

\subsection{General BA-POMDP}\label{ssec:brl-formulation:gba-pomdp}

The previous sections showed how the belief in the BPORL is a mixture of
components, parameterized posteriors over the dynamics, and how to compute
them.
We use this machinery to rewrite the belief update. Specifically, the belief as
a mixture of components (one for each state sequence,~\cref{eq:mixture-brl})
will be represented with weighted state-parameter tuples $(s, \ptheta)$, and
the update (\cref{eq:brl-recursive}) will be reformulated as transitions
between said tuples:
\begin{equation*}
    p(\pthetaTplus, s_{t+1}|\histplus) \propto
    \sum_{s_t,\pthetaT}
    \underbrace{p(\pthetaTplus,s_{t+1},o_{t+1}|\pthetaT,s_t,a_t)}_{\text{tuple
    transition probability}} \underbrace{p(\pthetaT, s_t|\hist)}_{\text{prior tuple}}
\end{equation*}
where the transition $p(\pthetaTplus,s_{t+1},o_{t+1}|\pthetaT,s_t,a_t) $
factorizes into
\begin{equation*}
    \underbrace{p(\pthetaTplus|\pthetaT,s_t,a_t,s_{t+1},o_{t+1})}_{\text{parameter
    update}}
    \underbrace{p(s_{t+1},o_{t+1}|\pthetaT,s_t,a_t)}_{\text{transition
    likelihood of~\cref{eq:brl-recursive}}}
\end{equation*}
where the parameter update is deterministic and reduces to the indicator
function that returns $1$ \textit{iff} $\pthetaTplus$ equals the result of
$\mathcal{U}$:
\begin{equation*}
    p(\pthetaTplus|\pthetaT,s_t,a_t,s_{t+1},o_{t+1}) =
    \identityF(\pthetaTplus,\mathcal{U}(\pthetaT,s_t,a_t,s_{t+1},o_{t+1}))
\end{equation*}
The last mental step interprets the tuples as belief/augmented POMDP states,
and the equations above as POMDP dynamics, which finally leads to the
formulation of the General BA-POMDP\@:

\begin{definition}[General BA-POMDP]\label{def:gbapomdp}

    Given a prior $\pthetaprior$, and a parameter update function $\mathcal U$,
    then the general BA-POMDP is a POMDP\@: $\gbapomdp
        (\pthetaprior, \mathcal U) = ( \mathbb{\bar{S},A,O},\mathcal{\bar{D},
            \bar{R}}, \gamma, \horizon, \gbaprior)$ with augmented state space $\bar{S}
        = (S \times \pTheta)$ and prior $\gbaprior = (\statePrior, \pthetaprior)$.
    $\mathcal{\bar R}$ applies the POMDP reward model $\bar{R}(\bar s, a,
        \bar{s'}) = R(s,a,s)$.  Lastly, the update function $\mathcal U$ determines
    the augmented dynamics model $\mathcal{\bar D}$:
    \begin{align}  \label{eq:bapomdp-dynamics}
        \mathcal{\bar{\mathcal{D}}}(\ptheta',s',o|s,\ptheta,a) & \triangleq
        p(\ptheta'|\ptheta,s,a,s',o) p(s',o|\ptheta,s,a)                    \\
                                                               & =
        \identityF(\ptheta',\mathcal{U}(\ptheta,s,a,s',o))p(s',o|\ptheta,s,a)
    \end{align}
\end{definition}

As long as the conditions of~\cref{lem:parameter-update-criterion} hold, the
GBA-POMDPs is a representation of a Bayesian PORL problem. Specifically, any
BPORL and its GBA-POMDP can be losslessly converted to identical `history MDPs'
--- we will call them $\bporl^{\hmdp}$ and $\gbapomdp^{\hmdp}$ --- in which the
states correspond to action-observation histories $h_{t}$.

\begin{thm}\label{thm:gbapomdp-is-bporl}

    Given the POMDP $\gbapomdp = (\pthetaprior,\mathcal{U}) $ of a Bayesian
    PORL problem $\bporl= (\pomdpFamilies,\modelPrior) $ and that the
    parameter update criterion (\cref{def:parameter-update-criterion},
    specifically~\cref{eq:parameter-update-criterion,eq:initial-param-matches-prior})
    hold, then $\gbapomdp^{Hist-MDP}=\bporl^{Hist-MDP}$.

\end{thm}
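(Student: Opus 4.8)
The plan is to show that both constructions---starting from the BPORL model $\bporl$ and from its associated POMDP $\gbapomdp$---collapse to the *same* history-MDP when we take histories $h_t = (a_0,o_1,\dots,a_{t-1},o_t)$ as the states. A history-MDP is fully specified by (i) its state set, (ii) its transition kernel over histories given an action, and (iii) its reward. So the proof reduces to checking that these three ingredients coincide for $\bporl^{\hmdp}$ and $\gbapomdp^{\hmdp}$. The state sets are identical by construction (both are the set of observable histories $h_t$, since in either model the only observable quantity is the action-observation stream). The reward in $\gbapomdp$ is $\bar R(\bar s,a,\bar s')=R(s,a,s')$, i.e.\ it ignores the $\theta$-component, so once we marginalize over the hidden part the induced history-reward is exactly that of $\bporl^{\hmdp}$. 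The crux is therefore the \emph{transition kernel over histories}.

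First I would write down the history-MDP transition for $\bporl^{\hmdp}$: given $h_t$ and action $a_t$, the probability of observing $o_{t+1}$ (hence moving to $h_{t+1}=(h_t,a_t,o_{t+1})$) is $\sum_{s_{t+1}} p(s_{t+1},o_{t+1}\mid h_t,a_t)$, which by \cref{eq:weight-update} and the mixture form \cref{eq:mixture-brl} is the Bayesian predictive $\int_{\mathcal D} p(\mathcal D\mid h_t)\,\mathcal D$ marginalized over the next state. Then I would write down the history-MDP transition for $\gbapomdp^{\hmdp}$: the GBA-POMDP has augmented states $(s,\theta)$, and its belief $p(\theta_{H_t},s_t\mid h_t)$ evolves by the update in \cref{ssec:brl-formulation:gba-pomdp}; the observation probability given $h_t,a_t$ is $\sum_{s_t,\theta_t} p(s_{t+1},o_{t+1}\mid\theta_t,s_t,a_t)\,p(\theta_t,s_t\mid h_t)$ summed over $s_{t+1}$. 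The two expressions will be equal precisely when the $\theta$-parameterized predictive agrees with the true Bayesian predictive along every reachable history.

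That agreement is exactly what \cref{lem:parameter-update-criterion} delivers: under the parameter update criterion and the prior-matching condition \cref{eq:initial-param-matches-prior}, we have $p(s_{t+1},o_{t+1}\mid h_t,a_t)=p(s_{t+1},o_{t+1}\mid\theta_{H_t},s_t,a_t)$ for every $t,H_t,a_t$. So I would argue by induction on $t$ that the belief of $\gbapomdp$ at $h_t$---i.e.\ the distribution $p(\theta_{H_t},s_t\mid h_t)$---is a reparameterization of the BPORL belief $p(\mathcal D,s_t\mid h_t)$ that induces the \emph{same} marginal predictive over $(s_{t+1},o_{t+1})$; the deterministic indicator update on $\theta$ in \cref{eq:bapomdp-dynamics} guarantees the induction carries: the $H_t$-to-$H_{t+1}$ bookkeeping is precisely the chain-rule factorization in \cref{eq:brl-recursive}. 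Since both history-MDPs have the same states, the same history-transition kernel, and the same reward, they are the same MDP.

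The main obstacle I anticipate is bookkeeping rather than conceptual: being careful that the ``summary'' $\theta_{H_t}$ is a function of the \emph{full} state-augmented history $H_t$, whereas the history-MDP state is only the observable $h_t$; one must sum over the hidden $s_0,\dots,s_t$ (equivalently over state sequences, as in the mixture \cref{eq:mixture-brl}) on both sides and check that the hidden-variable marginalization commutes with the predictive. This is where one invokes the mixture-over-state-sequences viewpoint of \cref{ssec:brl}: each state sequence contributes a component $p(\mathcal D\mid H_t)$ on the BPORL side and a component $p(\mathcal D;\theta_{H_t})$ on the GBA-POMDP side, and \cref{lem:parameter-update-criterion} makes these induce identical one-step predictives, so their history-weighted mixtures---hence the history transition kernels---coincide. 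Once that commutation is in hand, the theorem follows.
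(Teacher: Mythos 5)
Your proposal is correct and follows essentially the same route as the paper's own proof: fix the trivially identical components (histories as states, actions, prior, discount, horizon), then prove by induction on $t$ that the state-sequence/belief marginals and hence the one-step observation predictives and expected rewards coincide, using the parameter update criterion (via \cref{lem:parameter-update-criterion}) to equate the $\theta_{\Hist}$-parameterized predictive with the Bayesian predictive for each hidden state sequence, with the deterministic indicator update on $\theta$ collapsing the parameter integrals. The only point to make fully explicit, as the paper does, is that the reward equality also rests on the same inductively established equality of the state marginals $p(\st\mid\hist)$, not merely on $\bar R$ ignoring the $\theta$-component.
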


\begin{proof}
    The basic idea is that we can simply show that due to the matching dynamics
    of~\cref{eq:parameter-update-criterion}, both the rewards $R(h,a)$, as well
    as transition probabilities $T(h'|h,a)$ are identical in the two models.
    Full proof is given in the supplement.
\end{proof}

The upshot of this is that the GBA-POMDP represents the BPORL problem
\emph{exactly}, meaning that optimal solutions are preserved. In this way, it
facilitates different, potentially more compact, parametrizations of BPORL
problems without necessarily compromising the solution quality. Additionally,
like its predecessors, it casts the \textit{learning} problem as a
\textit{planning} problem, opening up the door of the vast body of POMDP
solution methods. This also means that a solution to the GBA-POMDP is a
principled answer to the exploration-exploitation trade-off which leads to
optimal behavior (which respect to the prior). Lastly, because, unlike its
predecessors, it places no assumptions on the prior, it opens the door to a
variety of different machine learning methods, as we will see
in~\cref{sec:baddr}.

\paragraph{Example realization: tabular-Dirichlet}

The BA-POMDP~\cite{ross_bayesian_2011} is the realization of the GBA-POMDP when
choosing the prior parameterization $\pthetaprior$ to be the set of Dirichlets.
The Dirichlet is the conjugate prior to the categorical distribution and comes
with a natural closed-form parameter update: $\mathcal{U}$ in BA-POMDP
increments the parameter (`count') associated with the transition $(s,a,s',o)$.

\section{Bayes-Adaptive Deep Dropout RL}\label{sec:baddr}
The GBA-POMDP is a template for deriving effective BPORL algorithms, but it
requires specifying the prior representation and parameter update function.
Here we demonstrate how this perspective can lead to tangible benefits by
deriving BADDr (Bayes-Adaptive Deep Dropout Reinforcement learning), a
GBA-POMDP instantiation based on neural networks. BADDr combines the principled
nature of the GBA-POMDP with the scalability of neural networks and Bayesian
interpretation of dropout. While BADDr introduces some approximations, our
empirical evaluation demonstrates scalability compared to existing BA-POMDP
variants and sample efficiency relative to non-Bayes scalable methods.

Here we present BADDr as a (GBA-) POMDP, then~\cref{ssec:baddr-solution}
describes the resulting solution method.

\subsection{BADDr: GBA-POMDP using Dropout}

Any GBA-POMDP is defined by its prior and update function. This section defines
BADDr's parameterization, dropout networks $\pthetannprior$, and the parameter
update function $\mathcal U$, which further trains these dropout networks, and
conclude with the formal definition.

\paragraph{BADDr (prior) parameterization}

We represent the dynamics prior with a transition and observation model.
The transition model is a neural network parameterized by
$\thetann_{\mathcal{T}}$ that maps states and actions into a distribution over
next states $f_{\thetann_\mathcal{T}}$: $(\mathbb{S} \times \mathbb{A})
    \rightarrow \Delta \mathbb{S}$, and similarly another network
$\thetann_{\mathbb{O}}$ maps actions and next states into a distribution over
observations $f_{\thetann_\mathbb{O}}$: $(\mathbb{S} \times \mathbb{A} \times
    \mathbb{S}) \rightarrow \Delta \mathbb{O}$.
For each state (observation) feature $n$ we predict the probability of its
values using softmax. In other words, we have an output (logit) $y_{nm}$ for
each value $m$ that feature $n$ can take.
Both networks together $\thetann = (\thetann_{\mathcal{T}},
    \thetann_{\mathbb{O}})$ describe the dynamics. As discussed
in~\cref{sec:background}, we interpret dropout as an approximation of a
posterior (recall~\cref{eq:dropout-MC-estimate}):
\begin{equation}\label{eq:baddr-dropout-posterior}
    p(s', o | \pthetann, s, a) \approx \frac{1}{N} \sum_{n=0}^N p(s', o|\tilde w_n, s, a) \text{; }\tilde w_n \sim dropout(\cdot | w)
\end{equation}

\paragraph{BADDr parameter update}

We adopt the perspective of training with dropout as approximate Bayesian
inference (\cref{sec:background}):
\begin{equation}\label{eq:approxBayesian}
    p(s_{t+1}, o_{t+1}|\Hist, a_t) \approx p(s_{t+1},
    o_{t+1}|\pthetann_{\Hist}, s_{t}, a_t).
\end{equation}
This raises the question of what the parameter update function should look
like: assuming $\pthetann_{\Hist}$ captures the posterior over the dynamics
given data $\Hist$, what operation produces the appropriate next weights given
a new transition.
A natural choice is to train the dropout network until convergence on all the
data available ($\Hist$ plus the new transition), however this is
computationally infeasible and unpractical. Instead we argue a reasonable
approximation is to perform a single-step gradient descent step on the new data
point.

We denote a gradient step on parameters $\thetann$ given data point $(s, a, s',
    o)$ as $\nabla \mathcal{L}(\thetann; (s,a), (s',o))$. The loss is defined as
the cross-entropy between the predicted and true next state and observation.
\begin{align}
    \mathcal{L}(\thetann;(s,a),(s',o)) & = -\log p_(s',o|s,a; \thetann)
    \nonumber                                                           \\
    \mathcal{U}(\pthetann,s,a,s',o)    & = \pthetann + \nabla
    \mathcal{L}(\pthetann;(s,a), (s',o)) \label{eq:baddr-parameter-update}
\end{align}

Given the prior and update function, we can now define:

\begin{definition}[BADDr]\label{def:baddr}

    BADDr is a realization of GBA-POMDP $\baddr = \gbapomdp(\pthetannprior,
        \mathcal U)$ with dropout neural networks $\pthetannprior$ as prior
    parameterization and a single gradient descent step
    (\cref{eq:baddr-parameter-update}) as parameter update $\mathcal U$.
    The state space of the resulting POMDP is $\bar \stateSpace: (\stateSpace
        \times \pThetann)$, and its dynamics are described as:
    \begin{equation}\label{eq:baddr-dynamics}
        \mathcal{\bar{\mathcal{D}}}(\ptheta',s',o|s,\ptheta,a) \triangleq
        \identityF(\ptheta',\mathcal{U}(\ptheta,s,a,s',o)) p(s', o |
        \pthetann, s, a)
    \end{equation}
    where $p(s', o | \pthetann, s, a)$ is computed according
    to~\cref{eq:baddr-dropout-posterior}

\end{definition}

In this way, BADDR is a specific instantiation of the GBA-POMDP framework. In
BADDr, the parameter update criterion (\cref{eq:parameter-update-criterion})
does \emph{not} hold exactly, since dropout networks only approximate Bayesian
inference. This means that we have to rely on empirical evaluation to assess
the overall performance, which is shown in~\cref{sec:experiments}.

\subsection{Online Planning for BADDr}\label{ssec:baddr-solution}

Here we detail how we use an online planning approach to solve the BADDr model.
We start with the construction of our initial belief, then describe how the
belief is tracked using particle filtering~\cite{thrun_monte_2000}, and finish
with how MCTS is used to select actions.

\paragraph{Constructing the initial prior}

The prior $b(\bar{s}_0) = p_0(s,\pthetann)$ is the product of the prior over
the model and POMDP state, $(p_{\pthetann_0},p_{s_0})$, where $p_{s_0}$ is
given by the original learning problem (of the POMDP).
Weakening the standard assumption in BRL, we do not require a full prior
specification, but assume we can sample domain simulators
$\domainSimulatorPrior$. We believe it is more common to be able to generate
approximate and/or simplified simulators for real world problems than it is to
describe (typically assumed) exhaustive priors.

The prior specification hidden in $\domainSimulatorPrior$ is translated into a
network ensemble~\cite{dietterich_ensemble_2000} $\{\pthetannprior\}$ by
training each member on a model sampled $\tilde{M} \sim \domainSimulatorPrior$.
The training entails supervised learning on $ ( s,a,s',o ) $ samples with
loss~\cref{eq:baddr-parameter-update}, generated by sampling state-action pairs
uniformly and simulating next-state-observation results (from $\tilde M$).
While this leads to an approximation due to the parametric representation,
there is no a problem of data scarcity thanks to the possibility of sampling
infinite data from the prior. Hence when using infinitely large neural
networks, which are universal function approximators, we theoretically could
capture the prior exactly.

Finally the initial particles (belief) is constructed by randomly pairing
states $s_0 \sim \statePrior$ with networks from the ensemble:
$\{(\pthetannprior, s_0)\}^n$.
The prior belief for subsequent episodes is generated by substituting the POMDP
states in the particle filter with initial states sampled from the prior (and
hence maintaining the belief over the dynamics).

\paragraph{Belief tracking}

Given the initial particle filter and BADDr's
dynamics~\cref{eq:baddr-dynamics}, we use rejection
sampling~\cite{thrun_monte_2000} to track the belief.
In rejection sampling (\cref{alg:rs}) the agent samples a particle $(s,\pthetann)$ from
particle filter and simulates the execution of a given action $a$. The
resulting (simulated) new state $(s',\pthetann')$ is added to the new belief
only if the (simulated) observation equals the true observation. Otherwise the
sample is rejected. This process repeats until the new belief contains some
predefined number of particles.

\begin{algorithm}[t]
    \begin{algorithmic}[1]
        \STATE{\textbf{in:} $b$, particle filter $(s,\pthetann)$}
        \STATE{\textbf{in:} $a$, taken action}
        \STATE{\textbf{in:} $o$, new observation}
        \STATE{\textbf{in:} $n$, desired number of particles in next belief}
        \algrule{}
        \STATE{$\bar{b}' \gets \emptyset$ \hfill // next belief, start empty}
        \WHILE{$size(\bar{b}') <  n$}
        \STATE{$(s,\pthetann) \sim \bar{b}$}
        \STATE{// propose sample: BADDr dynamics}\label{alg-line:rs-particle-start}
        \STATE{$\tilde\pthetann \sim \pthetann$ \hfill // dropout sample}\label{alg-line:rs-dropout-sample}
        \STATE{$(s',\tilde{o}) \sim p(\cdot|s,a;\tilde\pthetann)$}
        \STATE{$\pthetann' = \pthetann + \nabla \mathcal{L}(\pthetann, (s_t,a_t), (s_{t+1},o_{t+1}))$}\label{alg-line:rs-particle-end}
        \IF{$\tilde{o} = o$}
        \STATE{Add $(s',\pthetann')$ to $\bar{b}'$ \hfill // correct sampled observation}
        \ENDIF{\hfill // otherwise reject}
        \ENDWHILE{}
        \STATE{\textbf{return} $\bar{b}'$}
    \end{algorithmic}
    \caption{Rejection sampling}\label{alg:rs}
\end{algorithm}

\begin{algorithm}[t]
    \caption{Simulate}\label{alg:po-uct-simulation}
    \begin{algorithmic}[1]
        \STATE{\vphantom{p}\textbf{in:} $s$, POMDP state}
        \STATE{\vphantom{p}\textbf{in:} $\tilde w$ (root-sampled) dynamics; $\tilde w \sim w$}
        \STATE{\vphantom{p}\textbf{in:} $d$, tree depth}
        \STATE{\vphantom{p}\textbf{in:} $h$, action-observation history}
        \algrule{}
        \IF{$terminal(h)$ or $d$ is max depth}
        \STATE{return 0}
        \ENDIF{}
        \STATE{$a \gets ucb(h)$ \hfill \algorithmComment{UCB~\cite{auer_finite-time_2002} using statistics in node $h$}}
        \STATE{$s',o \sim p(\cdot|s,a;\tilde\pthetann)$ \hfill \algorithmComment{use root sampled model as simulator}}\label{alg-line:pouct-transition}
        \STATE{$R \gets \rewardFunction(s,a,s') $ \hfill \algorithmComment{reward function is given}}
        \STATE{$h' \gets (h,a,o)$}
        \IF{$h'  \in tree$}
        \STATE{$r \gets R + \gamma \times simulate((s',\pthetann), d+1, h')$}
        \ELSE{}
        \STATE{$initiate\_statistics\_for\_node(h')$}
        \STATE{$r \gets R + \gamma \times rollout((s',\pthetann), d+1, h')$}
        \ENDIF{}
        \STATE{$N(h,a) \gets N(h,a) + 1$ \hfill \algorithmComment{update statistics}}
        \STATE{$Q(h,a) \gets \frac{N(h,a)-1}{N(h,a)} Q(h,a) + \frac{1}{N(h,a)} r$}
        \STATE{\textbf{return} $r$}
    \end{algorithmic}
\end{algorithm}

\paragraph{Planning}

Ultimately we are interested in taking intelligent actions \emph{with respect
    to the belief} --- both over the state and the dynamics. As done in previous
Bayes-adaptive frameworks~\cite{katt_learning_2017,katt_bayesian_2019}, we also
utilize a POMCP~\cite{silver_monte-carlo_2010} inspired algorithm.
POMCP builds a look-ahead tree of action-observation futures to evaluate the
expected return of each action. This tree is built incrementally through
simulations (\cref{alg:po-uct-simulation}), which each start by sampling a
state from the belief.
Our approach is different from regular POMCP in the dynamics being used during
the simulations and is inspired by \emph{model root sampling} in
BA-POMCP~\cite{katt_learning_2017}: When POMCP samples a state $(s,\pthetann)$
from the belief at the start of a simulation, we subsequently sample a model
$\tilde w \sim dropout(\cdot | w)$. This model is then used throughout the
simulation as the dynamics. The computational advantage is two-fold: one, it
avoids computing $\mathcal U$~\cref{eq:baddr-parameter-update} (which requires
back-propagation) at each simulated step. Two, the models in the belief need
not be copied during root sampling because they are never modified (e.g.\ if
simulations were to update $\pthetann$, the pair of networks must be copied to
leave the belief untouched).

\section{Experiments}\label{sec:experiments}
In our experiments we compare BADDr to both a state-of-the-art non-Bayesian
approach and the (factored) BA-POMDP methods. We experiment on smaller
well-known PORL domains, as well as scale up to larger problems.
Overall our evaluation shows that, one, BADDr is competitive on smaller
problems on which current state-of-the-art BRL methods perform well; and two,
BADDr scales to problems that previous methods cannot. Furthermore, qualitative
analysis show that the agent's belief converges around the correct model in
tiger and that our method outperforms plain re-weighting of models. Lastly, the
strength of Bayesian methods is demonstrated in an comparison with a non-Bayes
model-based representative.

\subsection{Experimental Setup}\label{ssec:experiments:setup}
\paragraph{Baselines}

We compare with (F)BA-POMCP~\cite{katt_bayesian_2019} as the state-of-the-art
BRL baseline. To ensure a fair comparison we use their prior as the generative
process $\domainSimulatorPrior$ to sample POMDPs from when constructing our
prior. Additionally, for all domains, the parameters shared among FBA-POMCP and
BADDr (number of simulations \& particles, the UCB constant, etc.) are the
same. We also plot ``POMCP''' on the true models as an upper bound as dotted
lines.

We also include discriminative particle filter reinforcement learning
(DPFRL~\cite{ma_discriminative_2020}) as a baseline in our experiments. DPFRL
is a novel end-to-end deep RL architecture designed specifically for partial
observable environments. We used the official implementation and fine-tuned by
picking the best performing combination of the number of particles, learning
rate and network sizes.

\paragraph{Small domains \& their priors}

The experiments on the tiger problem~\cite{kaelbling_planning_1998} function as
a baseline comparison. This problem is well known for being tiny but otherwise
highly stochastic and partial observable. The prior here is a single dropout
network trained on the expected model of the prior used in (F)BA-POMCP\@.

In collision avoidance~\cite{luo_importance_2019}, the largest problem solved
with FBA-POMCP~\cite{katt_bayesian_2019}, the agent is a plane flying from the
right column of a grid to the left. The last column is occupied by a moving
single-cell obstacle that is partially observable and must be avoided. This
task is challenging in that \emph{both} the observation and transition model
are highly stochastic.
Again we employ their prior --- uncertainty over the behavior of the obstacle
--- to train our ensemble.

We designed the road racing problem, a variable-sized POMDP grid model of
highway traffic, in which the agent moves between three lanes in an attempt to
overtake other cars (one in each lane). The state is described by the distance
of each of those cars in their respective lanes and the current occupied lane.
During a step the distance of the other cars decrements with some probability.
The speed, and thus the probability of a car coming closer, depends on the
lane.  The initial distance of all cars is 6, and when their position drops to
-1, as the agent overtakes them, it resets. The observation is the distance of
the car in the agent's current lane, which also serves as the reward,
penalizing the agent for closing in on cars.
The prior over the observation model and the agent's location transitions is
correct. The speed that is associated with each lane, however, is unknown. A
reasonable prior is to assume no difference, so we set the expected
probability of advancing to 0.5 for all lanes.

\paragraph{Large domains \& their priors}

We run an additional larger experiment of the road racing problem with nine
lanes. This significantly increases the size of the problem and, as will be
mentioned in the results, makes previous frameworks intractable.

The last and largest domain is gridverse. Here the agent must navigate from one
corner of a grid to the goal in the other, while observing only the cells in
front (a beam of width 3 leading to up to $96$ observation features). We run
this on a grid of up to 32 by 32 cells.
In this environment we assume the observation model is given and learn the
transition model of the agent's position and orientation. For our prior we
learn on data generated by a simulator with the correct dynamics for
``rotations'', but a noisy ``forward'' action. The challenge for the agent is to
correctly infer the distance of this action (and thus its own location) online.

\begin{figure*}[t]
    \begin{subfigure}[b]{.33\linewidth}
        \centering
        \includegraphics[width=1\linewidth]{figures/experiments/tiger/comparison/tiger-performance.pdf}
        \caption{Bayes comparison on tiger.}\label{fig:results-tiger-performance}
    \end{subfigure}
    \begin{subfigure}[b]{.33\linewidth}
        \centering
        \includegraphics[width=1\linewidth]{figures/experiments/road-racer/3lanes/3lane-performance.pdf}
        \caption{Bayes comparison on road race (3 lanes).}\label{fig:results-rr-3}
    \end{subfigure}
    \begin{subfigure}[b]{.33\linewidth}
        \centering
        \includegraphics[width=1\linewidth]{figures/experiments/collision-avoidance/ca_performance.pdf}
        \caption{Bayes comparison on collision avoidance.}\label{fig:ca-experiments}
    \end{subfigure}
    \begin{subfigure}[b]{.33\linewidth}
        \centering
        \includegraphics[width=1\linewidth]{figures/experiments/road-racer/9lanes/9lane-performance.pdf}
        \caption{Our work on road race (9 lanes).}\label{fig:results-rr-9}
    \end{subfigure}
    \begin{subfigure}[b]{.33\linewidth}
        \centering
        \includegraphics[width=1\linewidth]{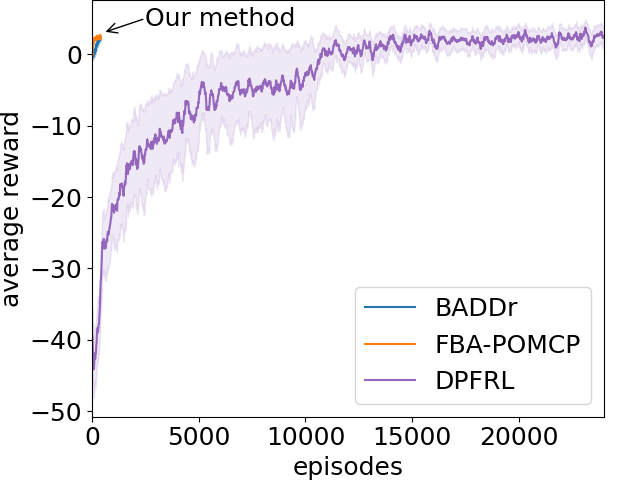}
        \caption{Non-Bayes comparison on tiger.}\label{fig:dpfrl-comparison-tiger}
    \end{subfigure}
    \begin{subfigure}[b]{.33\linewidth}
        \centering
        \includegraphics[width=1\linewidth]{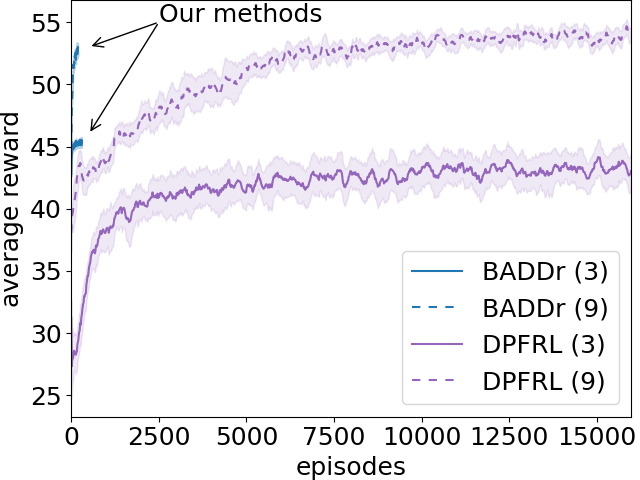}
        \caption{Non-Bayes comparison on road race (3 \& 9).}\label{fig:dpfrl-comparison-rr}
    \end{subfigure}
    \begin{subfigure}[b]{.33\linewidth}
        \centering
        \includegraphics[width=1\linewidth]{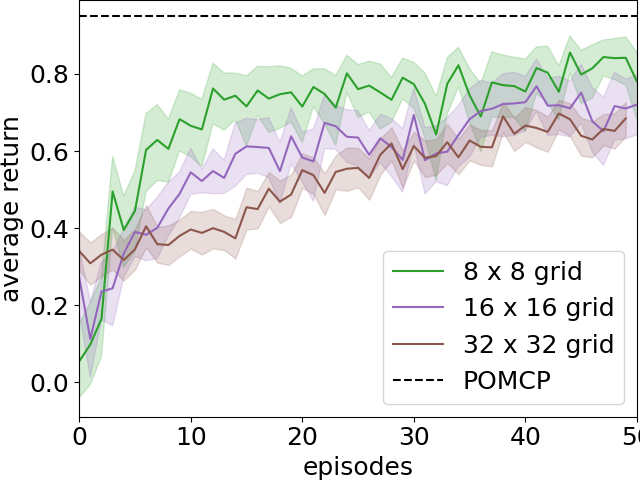}
        \caption{Our work on gridverse.}\label{fig:results-gridverse}
    \end{subfigure}
    \begin{subfigure}[b]{.33\linewidth}
        \centering
        \includegraphics[width=1\linewidth]{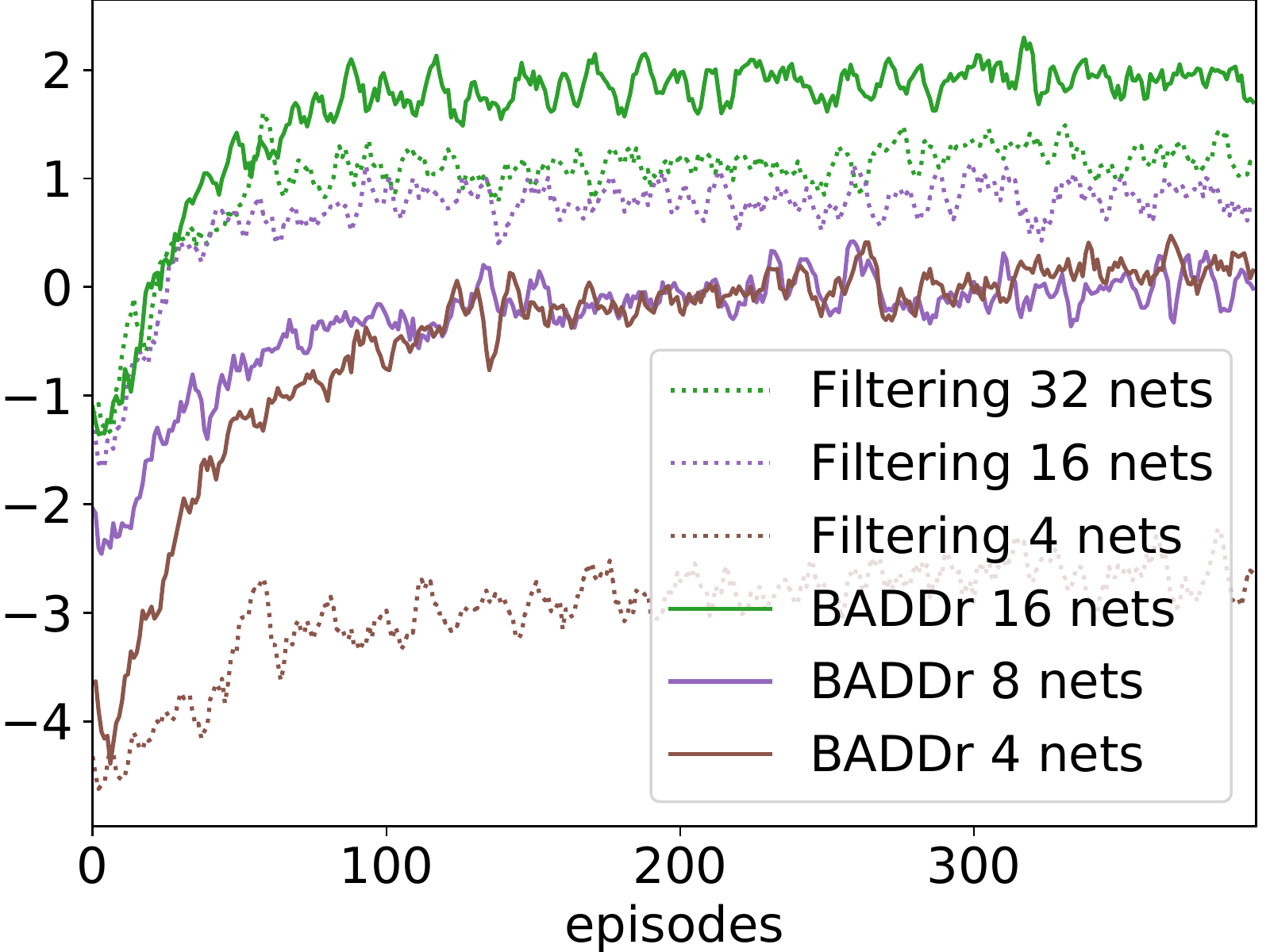}
        \caption{Ablation (no model updates) on tiger.}\label{fig:ensemble-comparison}
    \end{subfigure}
    \begin{subfigure}[b]{.33\linewidth}
        \centering
        \includegraphics[width=1\linewidth]{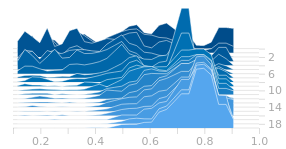}
        \caption{Belief of tiger observation model.}\label{fig:belief-tiger-hist}
    \end{subfigure}
    \caption{%
        Our work (blue) is competitive with FBA-POMCP in small problems (a \&
        b), and can scale to larger instances (d \& g). Fig. (c) shows that
        BADDr struggles when prior certainty is crucial.
        Fig (e \& f) compares with DPFRL, where BADDr shows both a better
        initial performance due to exploiting the prior and better sample
        efficiency. Dotted lines represent upper bound by running POMCP on the
        true POMDP\@.
        Fig. (h) demonstrates BADDr (solid) requires far fewer models than an
        ablation method that only re-weights models in its beliefs (dotted).
        Fig. (i) shows the belief in BADDr on tiger converges to the true
        value (0.85).
    }
    \Description{%
        Our work (blue) is competitive with FBA-POMCP in small problems (a \&
        b), and can tackle larger instances (d \& g). Fig. (c) shows that BADDr
        struggles when prior certainty is crucial, but outperforms the baseline
        otherwise.
        Fig (e \& f) compare our methods with DPFRL\@. Our methods show both a
        better initial performance due to exploiting the prior and better
        sample efficiency. Dotted lines represent upper bound by running POMCP
        on the true POMDP\@.
        Fig. (h) compares BADDr (solid) with an ablation method that only
        re-weights models in its beliefs (dotted). This ablated approach
        requires far more initial models which does not scale problems with
        more variables. Fig. (i) shows the model posterior of BADDr in tiger
        converges to true value (0.85).
    }
\end{figure*}

\subsection{Bayesian RL Comparison}\label{ssec:experiments:bayes}
\paragraph{Small domains}

The smaller domains are generally compactly modeled by the (F) BA-POMDP\@. As a
result the baseline BRL methods are near optimal and we cannot expect to do
much better. Rather these experiments test whether BADDr is sample efficient
even when compared to optimal representations.

\Cref{fig:results-tiger-performance} compares our method with (factored)
BA-POMCP on tiger. Unsurprisingly, the tabular representation has a slight
advantage initially thanks to the sample efficiency. After twice the amount of
data, our method catches up and reaches the same performance. Although Tiger is
widely considered a toy problem due to its size, the inference and resulting
planning problem are hard: a slight difference in the belief over the model
significantly alters the optimal policy. BADDr's performance here showcases the
ability to tackle highly stochastic and partially observable tasks.

We also investigate how well BADDr captures the posterior over the model.
\Cref{fig:belief-tiger-hist} shows the belief over the probability of hearing
the tiger behind the correct door in a particular run. Initially the prior is
uncertain and its expectation is incorrect, but over 20 episodes the belief
converges to the true value of 0.85.

\Cref{fig:ca-experiments} shows BADDr performs nearly as well as FBA-POMCP on
the collision avoidance problem. We hypothesize that the representational power
of the Dirichlets, in contrast with an ensemble of dropout networks, explains
the discrepancy.
Specifically, the Dirichlet allow more control over the \emph{certainty} of the
prior: the agent prior over the observation model is confident (high number of
counts), and is uncertain over the transition of the obstacle. Admittedly, such
a prior is difficult to capture in an ensemble (and thus in BADDr). We test
this hypothesis by running FBA-POMCP with an equally uncertain prior, called
`FBA-POMCP\@: uncertain prior'. Results show that BADDr performs somewhat in
the middle of both. Hence in some occasions the prior representation of BADDr
results in diminished performance, but BADDr shows higher potential when both
methods are provided similar prior knowledge (as BADDr outperforms `FBA-POMCP:
uncertain prior'). Note that FBA-POMCP is given the correct (sparse) graphical
model, which is a strong assumption in practice that simplifies the learning
task.

On the 3-lanes road racing domain (\cref{fig:results-rr-3}) the difference
between our work and BA-POMCP is nonexistent. This again confirms that BADDr is
competitive with state-of-the-art BRL methods on small problems which these
methods are designed for and perform near optimal in. Unlike real applications,
these problems are compactly represented by tables and improvements are
unlikely.

\paragraph{Larger domains}

The advantage of our method becomes obvious in larger problems. In the 9 lanes
problem (\cref{fig:results-rr-9}), for instance, even FBA-POMCP has $10^{13}$
entries, is unable represent this compactly, and runs out of memory. But a
dropout network of 512 nodes can model the dynamics well enough: despite the
increasing size of the problem, the learning curve is similar to the smaller
problem (\cref{fig:results-rr-3}) and a similar amount of data is needed to
nearly reach the performance of POMCP in the POMDP\@. This suggests that there
is some pattern or generalization that BADDr is exploiting.

\Cref{fig:results-gridverse} shows the performance on gridverse on a grid of
size 8, 16 and 32. The agent learns to perform nearly as well as if given the
true model, indicated by the dotted line (for all 3 sizes), with only a small
visible effect of significantly increasing the size of the problem. Note again
that for this domain tabular representations are infeasible: a single particle
would need to specify up to $10^7$ parameters (roughly 40GB of memory for a
64bit system). Bayes networks (FBA-POMDP) is unable to exploit the structure of
this domain, which does not show itself through independence between features.
However, the dropout networks in the belief of BADDr can generalize to problems
otherwise too large to represent.

\paragraph{Ablation study: re-weighting}

\Cref{ssec:brl} claimed that, while technically correct, solely re-weighting
models in the belief (\cref{eq:weight-belief-update}) leads to belief
degeneracy as it relies on a correct (or good) model to be present in the
initial belief.
This experiments verifies that claim by assessing the performance of plain
re-weighting, denoted with `filtering'. This is implemented by omitting the
parameter update function $\mathcal U$ (e.g.\ SGD step).
\Cref{fig:ensemble-comparison} shows that the performance of both filtering
(dotted) and BADDr (solid) increases as a function of the number of models in
the prior. However, BADDr is significantly more efficient: `filtering' 128
models in the tiger problem performs similarly to BADDr with just 8. Hence,
while theoretically possible, it requires too many models to be useful.

\paragraph{Run-time}

BADDr is a little slower than FBA-POMCP, since calling (i.e.\ planning) and
training neural networks (i.e\ belief update) is computationally more expensive
than tables. In practice, however, we found this insignificant. In environments
where tabular approaches are applicable and fit in memory the difference was at
most a small factor. As a result, a single seed/run of any experiment finished
within a day on a typical CPU-based Architecture.

\subsection{Non-Bayesian RL Comparison}\label{ssec:experiments:non-bayes}
We also ran DPFRL on all domains to investigate the differences between
Bayesian and non-Bayesian approaches. Results on tiger
(\cref{fig:dpfrl-comparison-tiger}) and both road race instances
(\cref{fig:dpfrl-comparison-rr}) have been picked out as representative, but
other results looked similar and have been included in the appendix. Note the
performance of the Bayesian methods are identical to previous plots; only the
x-axis is different.

While the eventual performance is similar, the difference in learning speed is
immediately obvious. Where the BRL methods learn within tens to hundreds of
episodes, DPFRL requires up to tens of thousands --- a direct consequence of
the sample-efficiency and exploration provided by the Bayesian perspective. In
general we found when measuring the number of episodes necessary to reach
similar performance, that BADDr was at least 40x  (and up to 1200x) more sample
efficient than DPFRL\@.

Another advantage of BRL is the exploitation of a prior, which is visualized by
the discrepancy in the \emph{initial performance}. In the tiger domain DPFRL
starts from scratch with a return of ${-}40$ by randomly opening doors. In real
applications with real consequences, this can be a huge problem and the ability
to encode domain knowledge is crucial. Random behavior is less problematic in
the road race domain, yet also there it takes DPFRL thousands of episodes (of
many time steps) to reach the performance BADDr has \emph{at the start}.

\section{Related Work}\label{sec:related}
Bayesian RL for discrete POMDPs typically adopts the Dirichlet prior approach
taken in the BA-POMDP~\cite{ross_bayesian_2011,ghavamzadeh_bayesian_2016}. For
instance, before generalized to unknown structures with the
FBA-POMDP~\cite{katt_bayesian_2019}, prior work represented the model posterior
as Dirichlets over Bayes network parameters~\cite{poupart_model-based_2008}.
Other work circumvents the need for mixtures to represent the posterior by
assuming access to an oracle to provide access to the underlying
state~\cite{jaulmes_learning_2005,jaulmes_formal_2007}. By exploiting this
information, they approximate the belief with a MAP estimate of the counts.
A notable exception to this line of work is the
iPOMDP~\cite{doshi-velez_infinite_2009}. This work is more general in that
knowledge of the state \emph{space} is not assumed a-priori. Dropping this
assumption means that it is impossible to sum over state sequences, and hence
our formulation is not compatible.
Bayesian methods for continuous POMDPs generally assume Gaussian dynamics and
model the belief with a GP\@. The methods in the literature vary in their
assumptions, such as restricting to a MAP
estimate~\cite{dallaire_bayesian_2009}, simplifying the observation model to
Gaussian noise around the state~\cite{mcallister_data-efficient_2017}, full
access to the state during learning~\cite{deisenroth_solving_2012}. More
similar in spirit to our method is the
BA-Continuous-POMDP~\cite{ross_bayesian_2008}, as it maintains a mixture of
model posteriors, Normal-inverse-Wishart parameters, and presents a similar
derivation to ours (specific to said parameterization).

In contrast, Bayesian \emph{model-free} approaches maintain a distribution over
the policy with, for example, ensembles~\cite{osband_randomized_2018} or
Bayesian Q-networks~\cite{azizzadenesheli_efficient_2018}. While proven
successful in their exploration-dependent domains, they have not been tested
under partial observability.

Model-based RL for fully observable MDPs~\cite{chua_deep_2018} is better
understood and include ``world models''~\cite{ha2018world,hafner2020mastering}
and Dyna-Q based methods~\cite{yu2020mopo,janner2019trust}. Bayesian
counterparts include ensemble methods~\cite{rajeswaran_epopt:_2017},
variational Bayes (variBad~\cite{zintgraf_varibad_2019}), and GP-based
models~\cite{deisenroth2011pilco} (similar to us extended with a dropout
network approximation of the dynamics~\cite{gal2016improving}. Most relevant
here is the work on the Bayes-adaptive
MDP~\cite{wyatt_exploration_2001,guez_sample_based_2015,duff_optimal_2002,8202242}.
MDPs, however, are strictly easier and these methods do not trivially extend to
partial observability.

\section{Conclusion}
Bayesian RL for POMDPs provide an elegant and principled solution to key
challenges of exploration, hidden state and unknown dynamics. While powerful,
their scalability and thus applicability is often lacking.
This paper presents a rigorous formulation of the General Bayes-adaptive POMDP,
as well as a novel instantiation, BADDr, which improves scalability while
maintaining sample efficient with dropout networks as a Bayesian estimate
of the dynamics.
The empirical evaluation shows our method performs competitively with
state-of-the-art BRL on small problems, and solves problems that were
previously out of reach. It also demonstrates the strengths of Bayesian
methods, the ability to encode prior and guide exploration, through a
comparison with the non-Bayesian DPFRL\@.

\begin{acks}

    \begin{wrapfigure}{rt}{0.15\columnwidth}
        \vspace{-11pt}
        \hspace*{-.75\columnsep}\includegraphics[width=1.5\linewidth]{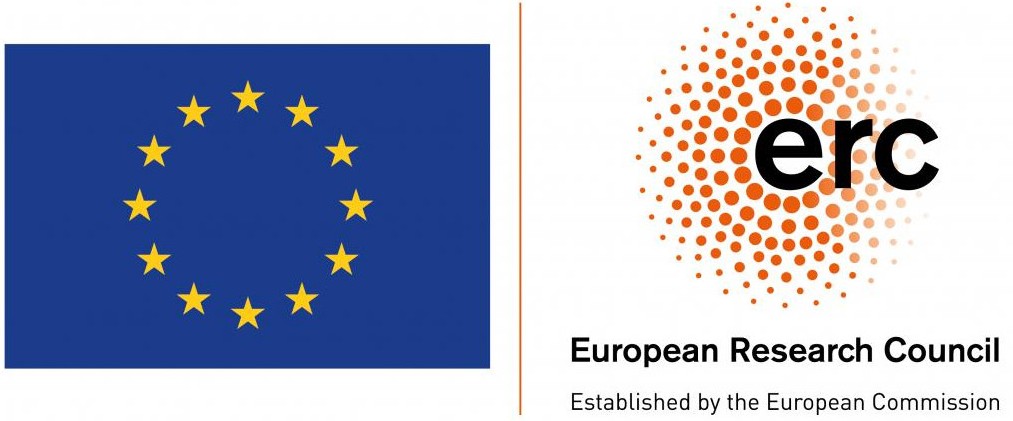}
        \vspace{-11pt}
    \end{wrapfigure}
    This project is funded by NSF grants \#1734497 and \#2024790, Army Research
    Office award W911NF20-1-0265, and European Research Council under the
    European Union’s Horizon 2020 research and innovation programme (grant
    agreement No.~758824 \textemdash INFLUENCE).

\end{acks}

\bibliographystyle{ACM-Reference-Format}
\bibliography{ref.bib}

\myAppendix{
    \appendix
    \onecolumn
    \section{Proofs}\label{sec:proofs}

The paper defers two proofs: the derivation of equation (6) on page 3 and the
theorem (1) on page 4. Here in~\cref{ssec:proofs:brl-derivation} the equation
is derived, and~\cref{ssec:proofs:equivalence-lemma} proofs the lemma.

\subsection{BRL Derivation}\label{ssec:proofs:brl-derivation}
Here we the derive the claim in that the belief can be computed by:

\begin{equation}
    \sum_{\st} \stateDistrtplus \modelDistrtplus = \eta \sum_{\st} \stateDistrt \pDynHist
    \modelDistrtplus \label{app-eq:brl-recursive}
\end{equation}

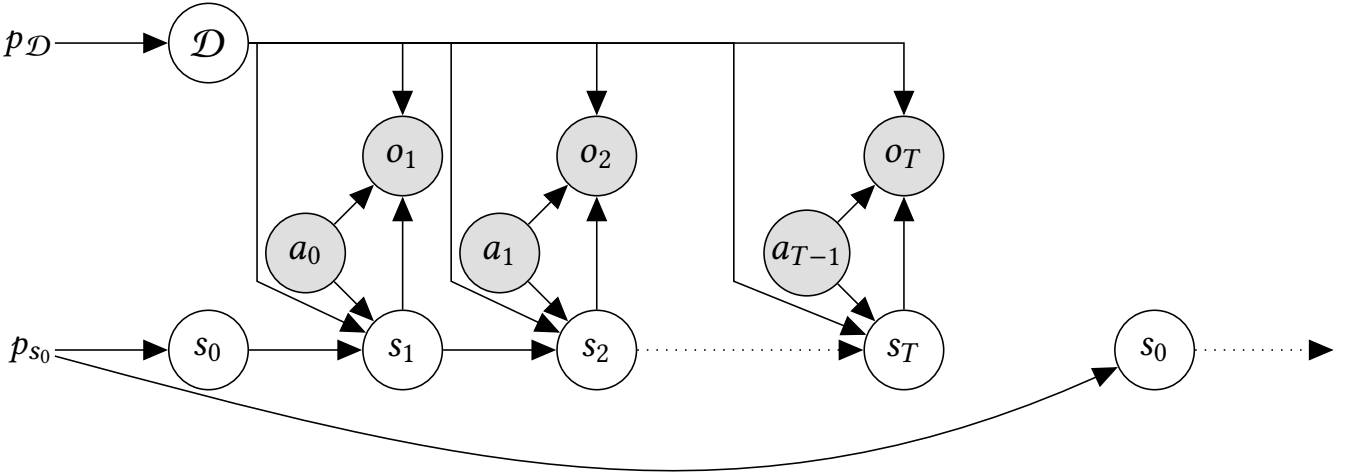
\begin{figure}[H]
    \centering
    \resizebox{\linewidth}{!}{{\begin{tikzpicture}[every node/.style={scale=1}]

    \node[latent] (s00) {$s_0$};
    \node[latent,right=of s00] (s10) {$s_1$};

    \node[latent,right=of s10] (s20) {$s_2$};

    \node[latent,right=2cm of s20] (sT0) {$s_T$};

    \node[latent,right=1.5cm of sT0] (s01) {$s_0$};
    \node[right=of s01] (send) {};

    \node[latent,above=2cm of s00] (D) {$\mathcal{D}$};
    \node[const,left=of D] (pD) {$p_\mathcal{D}$};
    \draw[->] (pD) -- (D);

    \node[const,left=of s00] (ps0) {$p_{s_0}$};
    \edge {ps0} {s00};
    \draw[->] (ps0) to [out=-15, in=205] (s01);

    \node[obs,above=of s10] (o10) {$o_1$};
    \node[obs,above=of s20] (o20) {$o_2$};
    \node[obs,above=of sT0] (oT0) {$o_T$};

    \node[obs,xshift=-.5cm] (a00) at ($(s10.west)!.5!(o10.west)$) {$a_0$};
    \node[obs,xshift=-.5cm] (a10) at ($(s20.west)!.5!(o20.west)$) {$a_{1}$};
    \node[obs,xshift=-.5cm] (aT0) at ($(sT0.west)!.5!(oT0.west)$) {$a_{T-1}$};

    \draw[->] (D.east) -| ($(s00.north)!.5!(a00)$) -- (s10);
    \draw[->] (D.east) -| ($(s10.north)!.5!(a10)$) -- (s20);
    \draw[->] (D.east) -| ($(s10.north)!.5!(a10) + (2.5cm,0)$) -- (sT0);

    \edge {a00} {s10};
    \edge {a10} {s20};
    \edge {aT0} {sT0};

    \edge {s00} {s10};
    \edge {s10} {s20};

    \draw[->] (D) -| (o10);
    \draw[->] (D) -| (o20);
    \draw[->] (D) -| (oT0);

    \edge {s10} {o10};
    \edge {s20} {o20};
    \edge {sT0} {oT0};

    \edge {a00} {o10};
    \edge {a10} {o20};
    \edge {aT0} {oT0};

    \draw[dotted,->] (s01) -- (send.east);
    \draw[dotted,->] (s20) -- (sT0);

\end{tikzpicture}}}
    \caption{Graphical model of the BRL inference problem. The actions $a$ and
    observations $o$ depicted in gray are observable, meaning that the policy
    can be (is) dependent on them, while the states $s$ and dynamics
    $\mathcal{D}$ are hidden. The priors $p_\mathcal{D}$ and $p_{s_0}$
    represent the a-priori knowledge. Time is indicated with subscripts and
    progresses to the right.}\label{app-fig:brl-graph}
\end{figure}

where we, again, denote the observable history, the previous actions and
observations, with $\hist = (\at, \ot) = (a_0, o_1, a_1, \dots a_{t-1}, o_t)$,
as well as the priors $\modelPrior \in \Delta \mathcal{D}$ and $\statePrior \in
\Delta \mathbb{S}$ (\cref{app-fig:brl-graph}). We also continue to adopt the
notation for a full history as the combination of the observable history
$\hist$ with a state sequence $\st = s_0, s_1 \dots s_t$, as $\Hist =
(s_0,a_0,s_1,o_1 \dots a_{t-1},s_t,o_t)$:

\newcommand{\EtaAndSums}{\eta \sum_{s_t}\sum_{\stmin}}

\begin{align*}
    \text{belief} &= \sum_{\st} p(\stplus | \histplus) \modelDistrtplus \\
    (split\text{ }seq) &=
    \sum_{\st} p(\st,s_{t+1} | \hist,a_t,o_{t+1}) \modelDistrtplus \\
    (cond.) &= \sum_{\st}
    \frac{p(\st,s_{t+1},o_{t+1} | \hist, a_t)}{p(o_{t+1} | \hist, a_t)}
    \modelDistrtplus \\
    (chain) &= \eta \sum_{\st}
    p(\st | \hist, \cancel{a_t}) p(s_{t+1},o_{t+1} | \hist, \st, a_t)
    \modelDistrtplus \\
    (d \text{-}sep) &=  \eta \sum_{\st}
    p(\st | \hist) p(s_{t+1},o_{t+1} | \hist, \st, a_t) \modelDistrtplus \label{app-eq:d-sep} \\
    (\Hist = \st,\hist) &= \eta \sum_{\st}
    p(\st | \hist) p(s_{t+1},o_{t+1} | \Hist, a_t) \modelDistrtplus
\end{align*}

The validity of the d-separation is shown in~\cref{app-fig:brl-graph}. The
influence of the action $a_t$ on the previous state sequence $\st$ is blocked
by the hidden observation $o_{t+1}$ and state $s_{t+1}$. Hence $p(\st | \hist,
a_t) = p(\st | \hist)$. \qed

\subsection{Equivalence of History MDPs}\label{ssec:proofs:equivalence-lemma}
\newcommand{\bporlhmdp}{^\text{BRL}}
\newcommand{\gbapomdphmdp}{^\text{GBA}}

\newcommand{\transitionModel}{\mathcal{T}}


This section proves the equivalence between the history-MDP constructed from
the Bayesian PORL $\bporl^{\hmdp}$ problem definition and the history-MDP
constructed from the general BA-POMDP $\gbapomdp^{\hmdp}$ (theorem 1).

\subsubsection{Bayesian partially observable RL}

Recall that the BPORL problem is defined by (or assumes known) the state,
action and observation space $(\stateSpace, \actionSpace, \observationSpace)$
and a prior over the dynamics $\modelPrior$ and the initial state distribution
$\statePrior$.
To consider optimizing reward, we additionally assume that the reward function
$\rewardFunction$, discount factor $\gamma$, and horizon $\horizon$ is known.
Lastly, we denote the history as the (past) observable action-observation
sequence $\hist=(\at,\ot)=(a_{0},o_{1},a_{1},\dots a_{t-1},o_{t})$, and a full
(partially observable) history (including a state-sequence $\st$), as $\Hist =
    (\st, \hist) = (s_0,a_0,s_1,o_1 \dots a_{t-1},s_t,o_t)$, then at any point in
time the belief over the dynamics and current state is defined as (eq 3 in the
paper):

\begin{equation}\label{eq:bporl-bu}
    b\bporlhmdp(s',\mathcal D|h') triangleq p\bporlhmdp(s', \mathcal D | h,a,o) \propto \sum_s \mathcal D(s', o|s, a) b\bporlhmdp(\mathcal D, s|h)
\end{equation}

Note that it can also be defined as a (non-recursive) expectation over the complete past state sequence:

\begin{equation}\label{eq:bporl-belief-rolled-out}
    b\bporlhmdp(s_{t+1},\mathcal D|\histplus) = \sum_{\st}p\bporlhmdp(\stplus, \mathcal D | \histplus)
\end{equation}

\begin{definition}[$\bporl^{\hmdp}$]

    The history-MDP constructed from the BPORL problem can be defined by the
    following tuple $(\stateSpace\bporlhmdp, \actionSpace\bporlhmdp,
        \transitionModel\bporlhmdp, \rewardFunction\bporlhmdp,
        \statePrior\bporlhmdp, \gamma\bporlhmdp, \horizon\bporlhmdp)$, where:

\end{definition}

\begin{itemize}

    \item The action space, discount function and horizon is directly given by
          the BPORL formulation: $\actionSpace\bporlhmdp = \actionSpace$,
          $\gamma\bporlhmdp = \gamma$, $\horizon\bporlhmdp = \horizon$

    \item The state space is the set of histories $\stateSpace\bporlhmdp =
              \{h\}$

    \item The initial history state is always the empty sequence:

          \begin{equation}
              \statePrior\bporlhmdp = \begin{cases} 1, & \text{ if } s = \emptyset \\ 0 \end{cases}
          \end{equation}

    \item The (history) reward function is an expectation over the given
          (state-) reward function:

          \begin{align}
              \rewardFunction\bporlhmdp(s\bporlhmdp, a\bporlhmdp) & = \rewardFunction\bporlhmdp(h, a)                                                                         \\
                                                                  & = \sum_s \int_{\mathcal D} b\bporlhmdp(s,\mathcal D|h) \rewardFunction(s, a)  \label{eq:bporl-reward-exp}
          \end{align}

          where the belief $b\bporlhmdp$ is computed as in~\cref{eq:bporl-bu}

    \item The (history) transition function is completely determined by the
          observation probability:

          \begin{align}
              \transitionModel\bporlhmdp(s\bporlhmdp, a\bporlhmdp, s^{,\text{BRL}}) & = \transitionModel\bporlhmdp(h, a, h')                                  \\
                                                                                    & = p\bporlhmdp(h' | h, a) = \begin{cases}
                                                                                                                     p\bporlhmdp(o | h, a), & \text{ if }h' = hao \\
                                                                                                                     0
                                                                                                                 \end{cases}
          \end{align}

          where the observation probability is, again, an expectation over the
          state:

          \begin{equation}\label{eq:bporl-obs-exp}
              p\bporlhmdp(o | h, a) \triangleq  \sum_{s,s'} \int_{\mathcal D} b\bporlhmdp(\mathcal D,s|h) \mathcal D(s',o|s,a)
          \end{equation}

\end{itemize}

\subsubsection{The GBA-POMDP}

On the other hand, let us assume some parameterization $\ptheta \in \pTheta$ of
distributions over dynamics $p(\mathcal D; \ptheta)$, and an initial (prior)
parameters $\pthetaprior$ that match the dynamics priors:

\begin{equation}\label{eq:gbapomdp-prior-criterion}
    \int_{\mathcal D}\modelPrior(\mathcal D)\mathcal D(s_1,o_1|s_0,a_0) = p(s_1,o_1|\pthetaprior,s_0,a_0) \left ( = \int_{\mathcal D} p(\mathcal D | \pthetaprior) D(s_1,o_1|s_0,a_0) \right )
\end{equation}

and a parameter update function $\mathcal{U}$ such that new parameters induce
the same distribution over dynamics as the history would (the \emph{parameter
    update criterion} holds:

\begin{definition}[Parameter update criterion]\label{def:app-parameter-update-criterion}
    $\mathcal U$ is designed such that, if

    \begin{equation}
        p\bporlhmdp(s_{t+1}, o_{t+1}|\Hist, a_t) = p\gbapomdphmdp(s_{t+1}, o_{t+1}|\pthetaH, s_{t}, a_t)
    \end{equation}

    then for $\pthetaHplus = \mathcal U(\pthetaH, s_t, a_t, s_{t+1}, o_{t+1})$,

    \begin{equation*}
        p\bporlhmdp(s_{t+2},o_{t+2}|\Histplus,s_{t+1},a_{t+1})=p\gbapomdphmdp(s_{t+2},o_{t+2}|\pthetaHplus,s_{t+1},a_{t+1})
    \end{equation*}

\end{definition}

Additionally, let us define the POMDP as a result of the definition of the
GBA-POMDP: $\gbapomdp^{\hmdp} (\pthetaprior, \mathcal U) = (
    \mathbb{\bar{S},A,O},\mathcal{\bar{D}, \bar{\rewardFunction}}, \gamma,
    \horizon, \gbaprior)$. Then the belief in this POMDP (as given in eq 1 in the
paper) is computed as follows:

\begin{equation}\label{eq:gbapomdp-bu}
    b\gbapomdphmdp(\bar s' = (s', \ptheta') | h') \triangleq p\gbapomdphmdp(s', \ptheta' | h,a,o) \propto \sum_{\bar s} \mathcal{\bar D}(s', \ptheta', o|s, \ptheta, a) b\gbapomdphmdp(s, \ptheta | h)
\end{equation}

where

\begin{equation}\label{eq:app-bapomdp-dynamics}
    \mathcal{\bar{\mathcal{D}}}(\ptheta',s',o|s,\ptheta,a) \triangleq
    p(\ptheta'|\ptheta,s,a,s',o) p(s',o|\ptheta,s,a)
\end{equation}

\begin{definition}[$\gbapomdp^{\hmdp}$]

    The history-MDP constructed from a GBA-POMDP definition can be defined by
    the following tuple $(\stateSpace\gbapomdphmdp, \actionSpace\gbapomdphmdp,
        \transitionModel\gbapomdphmdp, \rewardFunction\gbapomdphmdp,
        \statePrior\gbapomdphmdp, \gamma\gbapomdphmdp, \horizon\gbapomdphmdp)$,
    where:

\end{definition}

\begin{itemize}

    \item The action space, discount function and horizon is directly given by
          the BPORL formulation: $\actionSpace\gbapomdphmdp = \actionSpace$,
          $\gamma\gbapomdphmdp = \gamma$, $\horizon\gbapomdphmdp = \horizon$

    \item The state space is the set of histories $\stateSpace\gbapomdphmdp =
              \{h\}$

    \item The initial history state is always the empty sequence:

          \begin{equation}
              \statePrior\gbapomdphmdp = \begin{cases}
                  1, & \text{ if } s\gbapomdphmdp = \emptyset \\
                  0
              \end{cases}
          \end{equation}

    \item The (history) reward function is an expectation over the given
          (state-) reward function:

          \begin{align}
              \rewardFunction\gbapomdphmdp(s\gbapomdphmdp, a\gbapomdphmdp) & = \rewardFunction\gbapomdphmdp(h, a)                                                                   \\
                                                                           & = \sum_{\bar s}b\gbapomdphmdp(\bar s| h) \bar{\rewardFunction}(\bar s, a)                              \\
                                                                           & = \sum_{s, \ptheta} b\gbapomdphmdp(s, \ptheta| h) \rewardFunction(s, a) \label{eq:gbapomdp-reward-exp}
          \end{align}

          where the belief $\bar b\gbapomdphmdp$ is computed as
          in~\cref{eq:gbapomdp-bu}

    \item The (history) transition function is completely determined by the
          observation probability:

          \begin{align}
              \transitionModel\gbapomdphmdp(s\gbapomdphmdp, a\gbapomdphmdp, s^{,\text{GBA}}) & = \transitionModel\gbapomdphmdp(h, a, h')                                     \\
                                                                                             & = p\gbapomdphmdp(h' | h, a) = \begin{cases}
                                                                                                                                 p\gbapomdphmdp(o | h, a), & \text{ if }h' = hao \\
                                                                                                                                 0
                                                                                                                             \end{cases}
          \end{align}

          where the observation probability is, again, an expectation over the
          state:

          \begin{equation} \label{eq:gbapomdp-obs-exp}
              p\gbapomdphmdp(o | h, a) \triangleq \sum_{s,\ptheta,s',\ptheta'} b\gbapomdphmdp(s,\theta | h) \bar{\mathcal D}(\theta',s',o|s,\theta,a)
          \end{equation}

\end{itemize}

\subsubsection{Trivial part of proof}

The proof consists of showing that each element of the belief-MDP tuples are
identical. It is clear from the definition/construction that the state space,
action space, state prior, discount factor and horizon are the same:

\begin{itemize}
    \item $\stateSpace\bporlhmdp = \stateSpace\gbapomdphmdp = \{h\}$
    \item $\actionSpace\bporlhmdp = \actionSpace\gbapomdphmdp = \actionSpace$
    \item $\statePrior\bporlhmdp = \statePrior\gbapomdphmdp = \begin{cases}
                  1, & \text{ if } s\bporlhmdp = \emptyset \\
                  0
              \end{cases}
          $
    \item $\gamma\bporlhmdp = \gamma\gbapomdphmdp = \gamma$
    \item $\horizon\bporlhmdp = \horizon\gbapomdphmdp = \horizon$
\end{itemize}

All that is left to proof is the equivalence of the transition and reward
function (i.e.~\cref{eq:bporl-reward-exp} equals~\cref{eq:gbapomdp-reward-exp}
and~\cref{eq:bporl-obs-exp} equals~\cref{eq:gbapomdp-obs-exp}).

To prove that the reward and transition model of the two history-MDPs are the
same we proof that for all histories $h$ the distribution over the state
sequence $p(\st | \hist)$ and observation $p(o | \hist, a)$ is the same. The
proof is via induction on both quantities.

Note that in these derivations $p\bporlhmdp$ refers to quantities in the BPORL
history-MDP $\bporl$, and $p\gbapomdphmdp$ analogously for the GBA-POMDP
history-MDP $\gbapomdp$.

\subsection{Proof State-Sequence Distribution Equality}

To prove:

\begin{equation}
    p\bporlhmdp(\stplus | \histplus) = p\gbapomdphmdp(\stplus | \histplus)
\end{equation}

where

\begin{align}
    p\bporlhmdp(\stplus | \histplus)    & = \int_{\mathcal D} p\bporlhmdp(\stplus, \mathcal D| \histplus) \\
    p\gbapomdphmdp(\stplus | \histplus) & = \int_{\ptheta'} p\gbapomdphmdp(\stplus, \ptheta'| \histplus)
\end{align}

\subsubsection{Base case: $t=0$}

Holds as for both models the prior is defined by $\statePrior$:

\begin{align}
    \int_{\mathcal D} b\bporlhmdp(s_0, \mathcal D| \hist{=}\{\}) & = \int_{\mathcal D} b\gbapomdphmdp(s_0, \mathcal D| \hist{=}\{\}) \\
    \statePrior                                                  & = \statePrior
\end{align}

\subsubsection{Induction case: $t+1$ given it holds for $t$}

Here we derive the state sequence distribution for both sides for $t+1$.

\paragraph{BPORL (lhs) term}

Note that the derivation of the first step is given in this
appendix (\cref{ssec:proofs:brl-derivation}).

\begin{align}
                                       & \int_{\mathcal D} p(\stplus, \mathcal D| \histplus)                                                                                                                       \\
    (\ref{ssec:proofs:brl-derivation}) & = \int_{\mathcal D} \frac{p\bporlhmdp(\st| \hist)}{p\bporlhmdp(o_{t+1}|\hist, a)} p\bporlhmdp(s_{t+1}, o_{t+1} | \Hist, a_t) p\bporlhmdp(\mathcal D | \Histplus)          \\
    (move\ \int)                       & =  \frac{p\bporlhmdp(\st| \hist)}{p\bporlhmdp(o_{t+1}|\hist, a)} p\bporlhmdp(s_{t+1}, o_{t+1} | \Hist, a_t) \cancel{\int_{\mathcal D}p\bporlhmdp(\mathcal D | \Histplus)} \\
    (tot\ prob)                        & =  \frac{p\bporlhmdp(\st| \hist)}{p\bporlhmdp(o_t|\hist, a)} p\bporlhmdp(s_{t+1}, o_t | \Hist, a_t) \label{eq:bporl-state-sequence-derived}
\end{align}

\paragraph{GBA-POMDP (rhs) term}

And we see that we can reach the same in the GBA-POMD:

\begin{align}
    \int_{\ptheta'} p\gbapomdphmdp(\stplus, \ptheta'| \histplus) & = \int_{\ptheta,\ptheta'} p\gbapomdphmdp(\stplus,\theta,\theta'|\hist,a_t,o_{t+1})                                                                                                                                            \\
    (cond)                                                       & = \int_{\ptheta,\ptheta'} \frac{p\gbapomdphmdp(\st,s_{t+1},\theta,\theta',o_{t+1}|\hist,a_t)}{p\gbapomdphmdp(o_{t+1}|\hist,a_t)}                                                                                              \\
    (chain)                                                      & = \int_{\ptheta,\ptheta'} \frac{p\gbapomdphmdp(\st,\theta|\hist,\cancel{a_t})}{p\gbapomdphmdp(o_{t+1}|\hist,a_t)} p\gbapomdphmdp(s_{t+1},o_{t+1},\theta' | \st, \theta, \cancel{\hist}, a_t)                                  \\
    (d \text{-} sep)                                             & = \int_{\ptheta,\ptheta'} \frac{p\gbapomdphmdp(\st,\theta|\hist)}{p\gbapomdphmdp(o_{t+1}|\hist,a_t)} \bar{\mathcal D}(s_{t+1},o_{t+1},\theta' | s_t, \theta, a_t)                                                             \\
    (\ref{eq:app-bapomdp-dynamics})                              & = \int_{\ptheta,\ptheta'} \frac{p\gbapomdphmdp(\st,\theta|\hist)}{p\gbapomdphmdp(o_{t+1}|\hist,a_t)}         \identityF(\ptheta',\mathcal{U}(\ptheta,s_t,a_t,s_{t+1},o_{t+1}))p\gbapomdphmdp(s_{t+1},o_{t+1}|\ptheta,s_t,a_t) \\
                                                                 & = \frac{p\gbapomdphmdp(\st|\hist)}{p\gbapomdphmdp(o_{t+1}|\hist,a_t)} p\gbapomdphmdp(s_{t+1},o_{t+1}|\ptheta_{\Hist},s_t,a_t) \label{eq:gba-pomdp-state-sequence-derived}
\end{align}

The integrals disappear because all probability is concentrated on just one set
of parameters $(\theta = \pthetaH, \theta'=\pthetaHplus)$: there is only
one parameter set associated with a particular history ($\int_{\theta}
    p\gbapomdphmdp(\st,\theta|\hist) = 0$ for all except $\pthetaH$), and a
deterministic update to $\pthetaHplus$. \\

Note that
equations~\cref{eq:bporl-state-sequence-derived,eq:gba-pomdp-state-sequence-derived}
are equal if

\begin{enumerate}

    \item the parameter update criterion holds $p\bporlhmdp(s_{t+1}, o_t |
              \Hist, a_t) = p\gbapomdphmdp(s_{t+1},o_{t+1}|\ptheta_{\Hist},s_t,a_t)$

    \item the state sequence probability at $t$ is equal $p\bporlhmdp(\st|
              \hist) = p\gbapomdphmdp(\st|\hist)$, and

    \item the observation probability is equal
          $p\bporlhmdp(o_{t+1}|\hist, a) = p\gbapomdphmdp(o_{t+1}|\hist,a_t)$

\end{enumerate}

Item 1 is assumed, item 2 is proven through induction, and item 3 is proven
below (\cref{ssec:observation-prob-proof}):

\begin{align}
    \cref{eq:bporl-state-sequence-derived}                                                 & = \cref{eq:gba-pomdp-state-sequence-derived}                                                                     \\
    \frac{p\bporlhmdp(\st| \hist)}{p\bporlhmdp(o_t|\hist, a)} p(s_{t+1}, o_t | \Hist, a_t) & = \frac{p\gbapomdphmdp(\st|\hist)}{p\gbapomdphmdp(o_{t+1}|\hist,a_t)} p(s_{t+1},o_{t+1}|\ptheta_{\Hist},s_t,a_t)
\end{align}

\qedsymbol

\subsection{Proof Observation Probability Equality}\label{ssec:observation-prob-proof}

To prove:

\begin{equation}
    p\bporlhmdp(o_{t+1}|\hist,a_t) = p\gbapomdphmdp(o_{t+1}|\hist, a_t)
\end{equation}

where, by definitions (\cref{eq:bporl-obs-exp,eq:gbapomdp-obs-exp}):

\begin{align}
    p\bporlhmdp(o_{t+1}|\hist,a_t)     & = \sum_{s,s'} \int_{\mathcal D} b\bporlhmdp(\mathcal D,s|h) \mathcal D(s',o|s,a)                      \\
    p\gbapomdphmdp(o_{t+1}|\hist, a_t) & = \sum_{s,\ptheta,s',\ptheta'} b\gbapomdphmdp(s,\theta | h) \bar{\mathcal D}(\theta',s',o|s,\theta,a)
\end{align}

\paragraph{BPORL (lhs) term}

We again first derive the lhs:

\begin{align}
                                       & \sum_{s,s'} \int_{\mathcal D} b\bporlhmdp(\mathcal D,s|h) \mathcal D(s',o|s,a)                                                                             \\
    (\ref{eq:bporl-belief-rolled-out}) & = \sum_{\stplus} \int_{\mathcal D} p\bporlhmdp(\st, \mathcal D|\hist) \mathcal D(s_{t+1},o_{t+1}|s_t,a_t)                                                  \\
    (chain)                            & = \sum_{\stplus} \int_{\mathcal D} p\bporlhmdp(\st|\hist) p\bporlhmdp(\mathcal D | \Hist) \mathcal D(s_{t+1},o_{t+1}|s_t,a_t)                              \\
    (move\ \int)                       & = \sum_{\stplus} p\bporlhmdp(\st|\hist) \int_{\mathcal D} p\bporlhmdp(\mathcal D | \Hist) \mathcal D(s_{t+1},o_{t+1}|s_t,a_t)                              \\
                                       & = \sum_{\stplus} p\bporlhmdp(\st|h) p\bporlhmdp(s_{t+1}, o_{t+1} | \Hist, a_t)                                                \label{eq:bporl-obs-derived}
\end{align}

\paragraph{GBA-POMDP (rhs) term}

and show we can reach the same equality with the rhs:

\begin{align}
                                               & \sum_{s,\ptheta,s',\ptheta'} b\gbapomdphmdp(s,\theta | h) \bar{\mathcal D}(\theta',s',o|s,\theta,a)                                                                                                        \\
    (tot\ prob)                                & = \sum_{\stplus,\ptheta,\ptheta'} p\gbapomdphmdp(\st,\theta | \hist) \bar{\mathcal D}(\theta',s_{t+1},o_{t+1}|s_t,\ptheta,a_t)                                                                             \\
    (\ref{eq:app-bapomdp-dynamics})            & = \sum_{\stplus,\ptheta,\ptheta'} p\gbapomdphmdp(\st,\theta | \hist) \identityF(\ptheta',\mathcal{U}(\ptheta,s_t,a_t,s_{t+1},o_{t+1}))p(s_{t+1},o_{t+1}|\ptheta,s_t,a_t)                                   \\
    (\ref{def:app-parameter-update-criterion}) & = \sum_{\stplus} p\gbapomdphmdp(\st | \hist) p(s_{t+1}, o_{t+1}|s_t, \theta_{\Hist}, a_t)                                                                                 \label{eq:gba-pomdp-obs-derived}
\end{align}

where again the integrals disappear because all probability is concentrated on
one set of parameters. Equality
between~\cref{eq:bporl-obs-derived,eq:gba-pomdp-obs-derived} is ensured when
the parameter update criterion is met and the state sequence distribution is
the same (the same requirements as in the enumeration above). Since those have
been proven above, this finishes the proof:

\begin{align}
    \cref{eq:bporl-obs-derived}                                                  & = \cref{eq:gba-pomdp-obs-derived}                                                                      \\
    \sum_{\stplus} p\bporlhmdp(\st|h) p\bporlhmdp(s_{t+1}, o_{t+1} | \Hist, a_t) & = \sum_{\stplus} p\gbapomdphmdp(\st | \hist) p\gbapomdphmdp(s_{t+1}, o_{t+1}|s_t, \theta_{\Hist}, a_t)
\end{align}

\qedsymbol

\section{Road Racing Domain}
Most domains are taken directly from the literature
(tiger~\cite{kaelbling_planning_1998} and collision
avoidance~\cite{luo_importance_2019}. Gridverse is an instantiation of the
`empty-` domains created in \url{https://github.com/abaisero/gym-gridverse}.
Road racing is a novel domain and thus deserves a little more detailed
description.

In the road racing problem, a larger POMDP grid model of highway traffic, the
agent moves between $n$ lanes in an attempt to overtake other
cars~\ref{fig:rr-vis}. The agent occupies one of the lanes, and can switch
between them by going up, down or otherwise stay in the current. The agent
shares the road with $n$ other cars, one in each lane. The (partially hidden)
state is described by the distance of each of those cars in their respective
lanes (horizontally) and the current occupied lane. During a step the distance
of the other cars decrements with some probability. If the agent either
attempts to leave the grid or bumps against another car it receives a penalty
(-1) and stays in its current lane instead. The agent observes only the
distance to the car in its current lane, which is also the reward (plus
potentially the penalty mentioned above). The speed, and thus the probability
of a car coming closer depends on the lane, and is a linear interpolation over
the lanes (probability of lane $i$ is $p_i = \frac{i+1}{n+1}$). Lastly, the
initial position of all cars is 6, and when the position of a car drops to -1,
it resets. More specifically:

\begin{figure}[htpb]
    \centering
    \caption{Road racer domain with 3 lanes}%
    \resizebox{1\textwidth}{!}{ \input{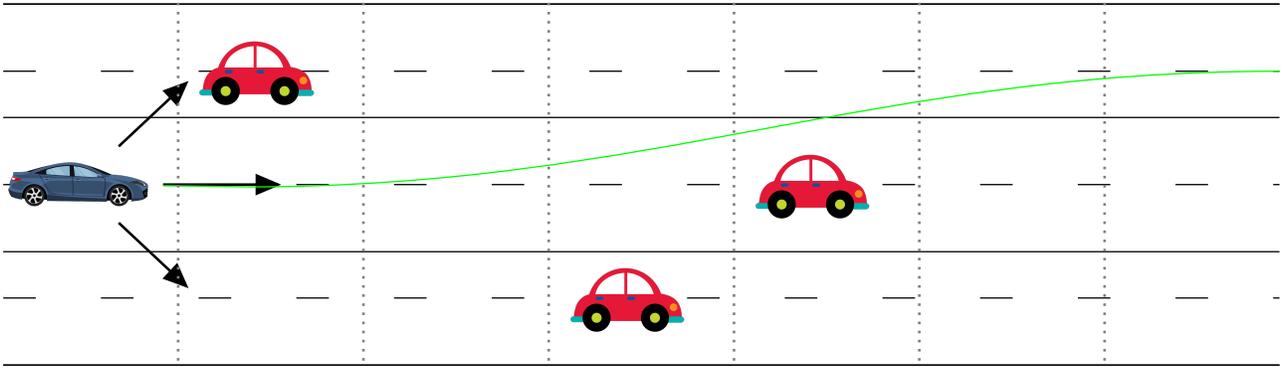} }
    \label{fig:rr-vis}
\end{figure}

\begin{itemize}

    \item The domain is parameterized by number of lanes $n$ and max distance
          of cars $l$, assumed to be 6 here

    \item State space $S$ consists of the current lane of agent and position of
          cars in each lane: $|S| = n \times l^n$

    \item Action space $A$ is $\{up, stay, down\}$, the action moves the agent
          deterministically to the adjacent lanes (up or down), or stays in place. If
          moving results in a collision with a car (or an attempt is made to go
          beyond the most extreme lanes on either side) then a penalty (-1) is
          incurred.

    \item Observation space $\observationSpace$ is $\{ 0 \dots l \}$, the
          observation $\mathcal{O}$ deterministically the distance to the car in the
          current lane

    \item The reward function $\rewardFunction$ returns the distance to the car
          in the current lane. A penalty of -1 is given if the agent either attempts
          to move into a car or out of bound

    \item The transition function $\mathcal{T}$ first updates the location of
          the car in each lane by decrementing their position ('distance from the
          agent') with some probability. Then the agent's location is updated by
          moving it deterministically according to its action, but staying in place
          if a move (up or down) action fails due to collision or out of bound. Cars
          that were overtaken by the agent (i.e.\ their position dropped below 0)
          have their position updated by setting it to $l$ (the next car appears).

\end{itemize}

\Cref{fig:rr-bn} shows, for arbitrary ($n$) number of lanes, the (smallest)
Bayes-network that is able to model the road racer domain correctly.

\begin{figure}[htpb]
    \centering
    \caption{Bayes-network representing the road-racer domain. $l$ refers to the lane (row) that is occupied by the agent, $c_i$ is the (column) position of car $i$. The observation is the distance/position $dist$ of the car in the agent's lane $l$}%
    \resizebox{.5\textwidth}{!}{ \begin{tikzpicture}


    \node (s) {};
    \node[right=of s,obs] (a) {$a$};
    \node[right=2 of a] (s') {};
    \node[right=2 of s'] (o) {};

    \node[latent,below=of s] (l) {$l$};
    \node[latent,below=of s'] (l') {$l'$};
    \draw[->] (l) -- (l');

    \node[obs,below=of o] (dist) {$dist$};
    \draw[->] (l') -- (dist);

    \node[latent,below=of l] (c-0) {$c_0$};
    \node[latent,below=of l'] (c'-0) {$c'_0$};

    \draw[->] (c-0) -- (c'-0);
    \draw[->] (c-0) -- (l');
    \draw[->] (c'-0) -- (dist);

    \draw[->] (a) -- (l');
    \draw[->] (a) -- (c'-0);
    \draw[->] (a) -- (dist);

    \foreach \i in {1,...,4}
        {
            \pgfmathtruncatemacro{\prev}{\i - 1}
            \node[latent,below=.4 of c-\prev] (c-\i) {$c_\i$};
            \node[latent,below=.4 of c'-\prev] (c'-\i) {$c'_\i$};

            \draw[->] (c-\i) -- (c'-\i);

            \draw[->] (c-\i) -- (l');

            \draw[->] (c'-\i) -- (dist);

            \draw[->] (a) -- (c'-\i);
        }

    \node[below=.1 of c-4] (c-dots) {$\vdots$};
    \node[below=.1 of c'-4] (c'-dots) {$\vdots$};

    \node[latent,below=.2 of c-dots] (c-rest) {$c_n$};
    \node[latent,below=.2 of c'-dots] (c'-rest) {$c'_n$};

    \draw[dotted,->] (c-rest) -- (c'-rest);
    \draw[dotted,->] (c-rest) -- (l');
    \draw[dotted,->] (c'-rest) -- (dist);

    \plate {s-plate} {(l) (c-0) (c-1) (c-2) (c-3) (c-4) (c-rest)} {$s$};
    \plate {s'-plate} {(l') (c'-0) (c'-1) (c'-2) (c'-3) (c'-4) (c'-rest)} {$s'$};
    \plate {o-plate} {(dist)} {$o$};

\end{tikzpicture} }
    \label{fig:rr-bn}
\end{figure}

\section{(Hyper) Parameters}
The (hyper) parameters of BADDr are conceptually separated into groups. First
(\cref{ssec:parameters:networks}) the networks that govern the architecture and
learning of the neural networks in the particles. Second
(\cref{ssec:parameters:solution}) are the parameters of the planning solution
methods MCTS and particle filtering. Lastly (\cref{ssec:parameters:experiment})
there are higher level parameters (such as horizon and discount factor) and
environment parameters.

\subsection{Neural Networks}\label{ssec:parameters:networks}

Here, again, we identify two groups of parameters. First is regarding the
general hierarchy (\cref{tab:parameters-nn}), and the other is specifically for
pre-training during the creation of the prior
(\cref{tab:parameters-nn-pretrain}).

\begin{table}[H]
    \centering
    \caption{Parameters describing BADDr pre-training}
    \label{tab:parameters-nn-pretrain}
    \begin{tabular}{l | c | c |c | c }
        domain                 & tiger      & collision avoidance & road racing                   & gridverse \\ \hline
        \# batches             & 4096       & 8192                & 2048 ($n=3$), 16384 ($n=9$)   & 512       \\
        batch size             & 32         & 32                  & 64 ($n=3$), 256 ($n=9$)       & 32        \\
        learning rate $\alpha$ & 0.1        & 0.05                & 0.005 ($n=3$), 0.0025 ($n=9$) & 0.0025    \\
        optimizer              & SGD        & SGD                 & SGD                           & Adam      \\
        \# pre-trained nets    & 1, 4, or 8 & 1                   & 1                             & 1
    \end{tabular}
\end{table}

\begin{table}[H]
    \centering
    \caption{Parameters describing BADDr neural networks}
    \label{tab:parameters-nn}
    \begin{tabular}{l | c | c |c | c }
        domain                        & tiger  & collision avoidance & road racing             & gridverse \\ \hline
        \# layers                     & 3      & 3                   & 3                       & 3         \\
        \# nodes per layer            & 32     & 64                  & 32 ($n=3$), 256 ($n=9$) & 256       \\
        activation functions          & $tanh$ & $tanh$              & $tanh$                  & $tanh$    \\
        online learning rate $\alpha$ & 0.005  & 0.0005              & 0.0001 ($n=9$)          & 0.0005    \\
        dropout probability           & 0.5    & 0.5                 & 0.1                     & 0.1
    \end{tabular}
\end{table}

\subsection{Solution Methods}\label{ssec:parameters:solution}

In our experience the method is robust to these parameters
(\cref{tab:solution-parameters}). For example, the method scales linearly both
in complexity and performance with the number of simulations and particles. A
notable exception is the exploration rate, which is set approximately to the
range of returns that can be expected from an episode.

\begin{table}[H]
    \centering
    \caption{Parameters used by the (planning) solution methods: MCTS and belief update}
    \label{tab:solution-parameters}
    \begin{tabular}{l | c | c |c | c }
        domain                    & tiger      & collision avoidance & road racing & gridverse \\ \hline
        exploration constant $u$  & 100        & 1000                & 15          & 1         \\
        belief update             & importance & importance          & rejection   & rejection \\
        IS resample size          & 128        & 32                  & N/A         & N/A       \\
        \# of particles in filter & 1024       & 128                 & 1024        & 8         \\
        \# MCTS simulations       & 4096       & 256                 & 128         & 8         \\
        search depth              & $\horizon$ & $\horizon$          & 3           & N/A
    \end{tabular}
\end{table}

\subsection{Experiment and Domain Parameters}\label{ssec:parameters:experiment}

These parameters (\cref{tab:experiment-parameters}) have little to do with the
method, but defines the domain and experiment setup more generally.

\begin{table}[H]
    \centering
    \caption{Parmeters more general to the experiment setup}
    \label{tab:experiment-parameters}
    \begin{tabular}{l | c | c |c | c }
        domain                      & tiger & collision avoidance & road racing               & gridverse \\ \hline
        \# episodes                 & 400   & 500                 & 200 ($n=3$) / 300 ($n=9$) & 250       \\
        \# horizon $\horizon$       & 30    & N/A                 & 20                        & 30        \\
        \# discount factor $\gamma$ & 0.95  & 0.95                & 0.95                      & 0.95      \\
        \# runs                     & 7500  & 35000               & 1000 ($n=3$), 300 ($n=9$) & 50
    \end{tabular}
\end{table}

\section{DPFRL Details}
DPFRL~\cite{ma_discriminative_2020} uses a learned particle filter to summarize
the observation-action history. Unlike a traditional filter, each particle is
not a hypothesis of the true state but is a latent vector with no semantic
meaning. The particle maintains a number of particles, which will then be
summarized to create a summary vector acting as the current belief state. This
belief state is used to learn an actor-critic agent using A2C.  We use the
official implementation code at \url{https://github.com/Yusufma03/DPFRL}. For
Tiger, Road-Race, and Collision-Avoidance, where observations are single
categorical values, we simply transform them into one-hot vectors and apply 2
FC layers to encode. For Grid-Verse where observations are 3D arrays of
categorical values, we first apply a single Pytorch's embedding layer and then
use a CNN of 3 layers (32, 64, and 64 channels) to encode. In all domains,
actions are encoded with a single FC(64) layer. For each domain, we perform
hyper-parameter sweeps on the learning rates \{1, 3, 10\}$\times 10^{-4}$ and
the number of particles \{15, 30, 45\}. The best performing learning rates and
the number of particles are reported in Figure \ref{fig:tiger},
\ref{fig:collision}, \ref{fig:road}, \ref{fig:grid}; other hyper-parameters are
listed in Table \ref{tab:training-params}.

\begin{table}[htbp]
    \centering
    \begin{tabular}{l c c}
        \toprule
        \textbf{Name}                   & \textbf{Value}                      \\
        \midrule
        Number of actors                & 16                                  \\
        Sample length                   & 5                                   \\
        Critic loss coefficient         & 0.5                                 \\
        Actor loss coefficient          & 1.0                                 \\
        Entropy loss coefficient        & 0.01                                \\
        Clipped gradient norm magnitude & 5                                   \\
        Optimizer                       & RMSProp ($\epsilon=1\times10^{-5}$) \\
        \bottomrule
    \end{tabular}
    \vspace{0.1in}
    \caption{
        Hyper-parameters (except for learning rates and the number of particles
        which are tuned for each domain) values used in DPFRL in all domains.
    }
    \label{tab:training-params}
\end{table}

\begin{figure}
    \begin{subfigure}[t]{.45\linewidth}
        \centering
        \includegraphics[width=1\linewidth]{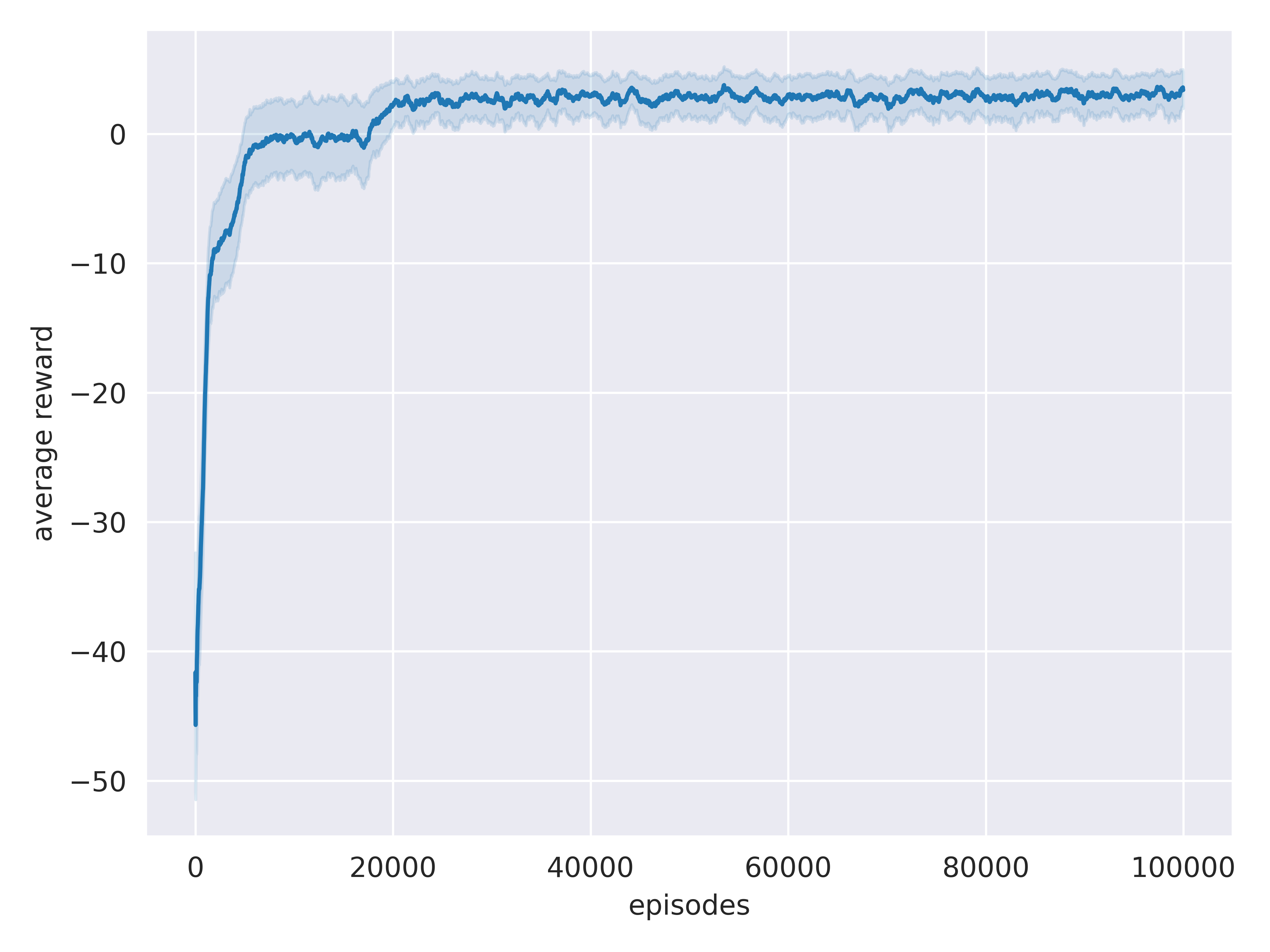}
        \caption{Return on Tiger (lr=1e-4, number of particles=30).}
        \label{fig:tiger}
    \end{subfigure}%
    \begin{subfigure}[t]{.45\linewidth}
        \centering
        \includegraphics[width=1\linewidth]{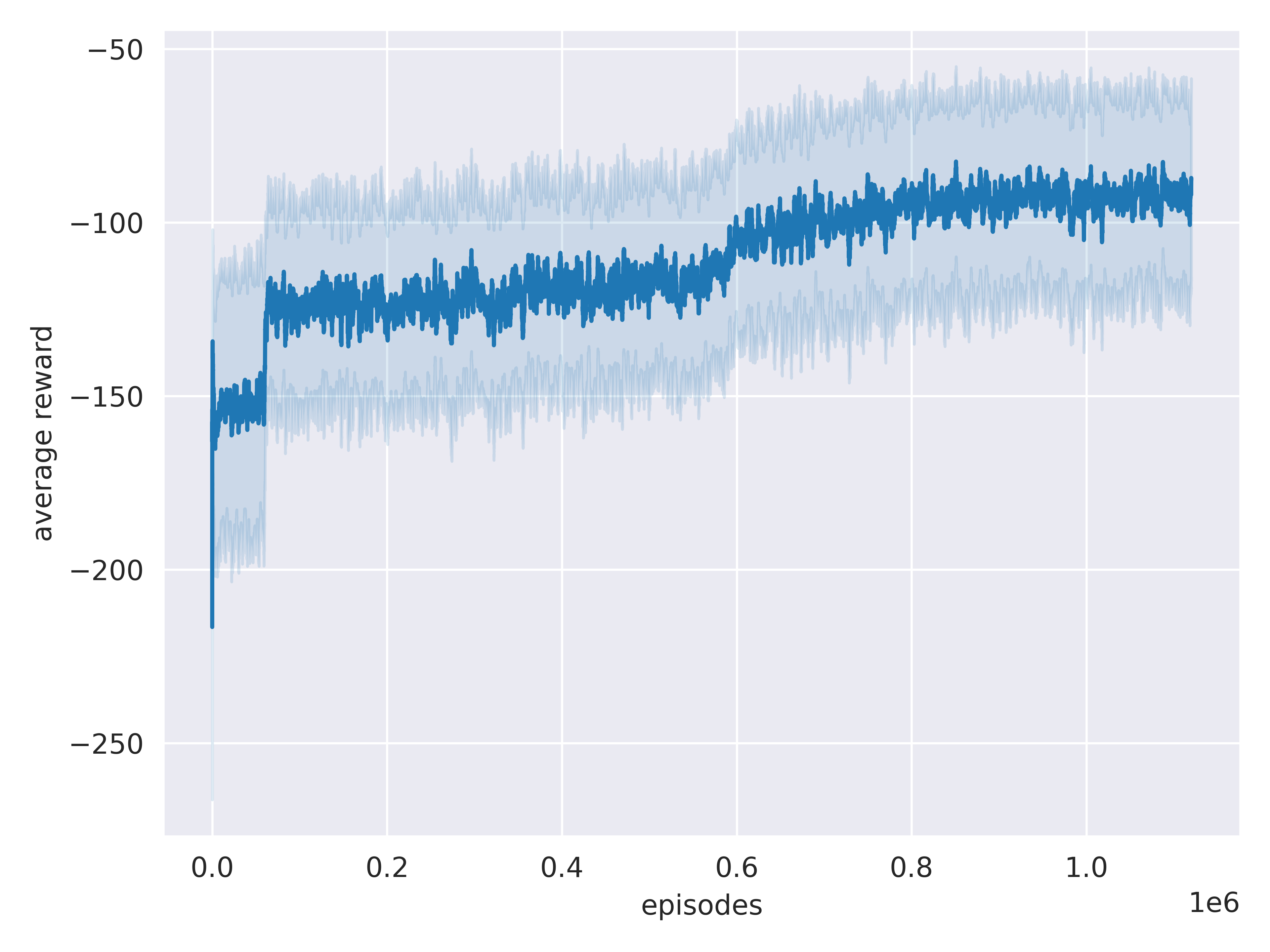}
        \caption{Return on Collision-Avoidance (lr=1e-4, number of particles=30).}\label{fig:collision}
    \end{subfigure}
\end{figure}

\begin{figure}
    \begin{subfigure}[b]{.45\linewidth}
        \centering
        \includegraphics[width=1\linewidth]{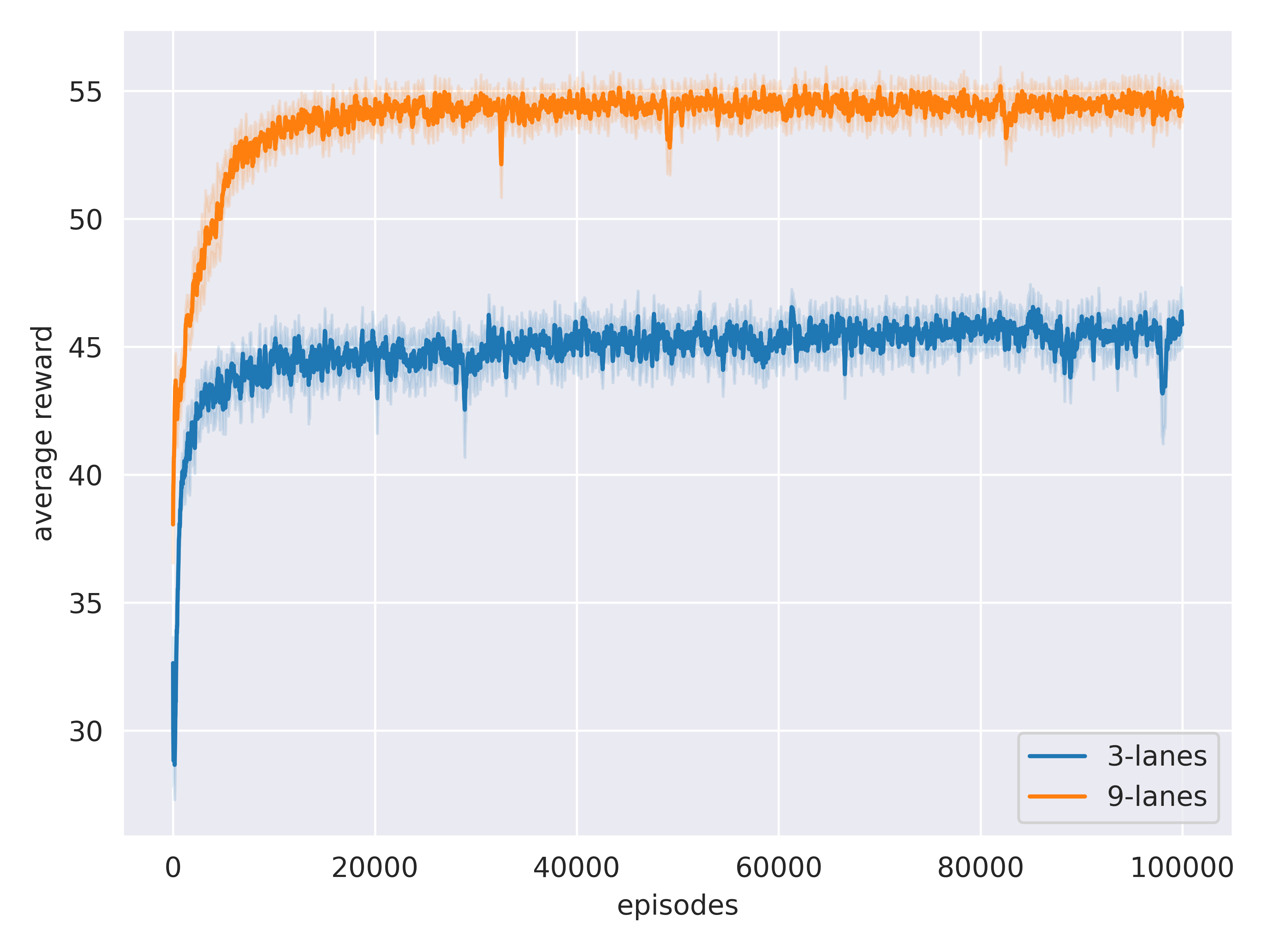}
        \caption{Return on Road-Race: 3-lane (lr=1e-3, number of particles=15) and
        9-lane (lr=3e-4, number of particles=15).}
        \label{fig:road}
    \end{subfigure}
    \begin{subfigure}[b]{.45\linewidth}
        \centering
        \includegraphics[width=1\linewidth]{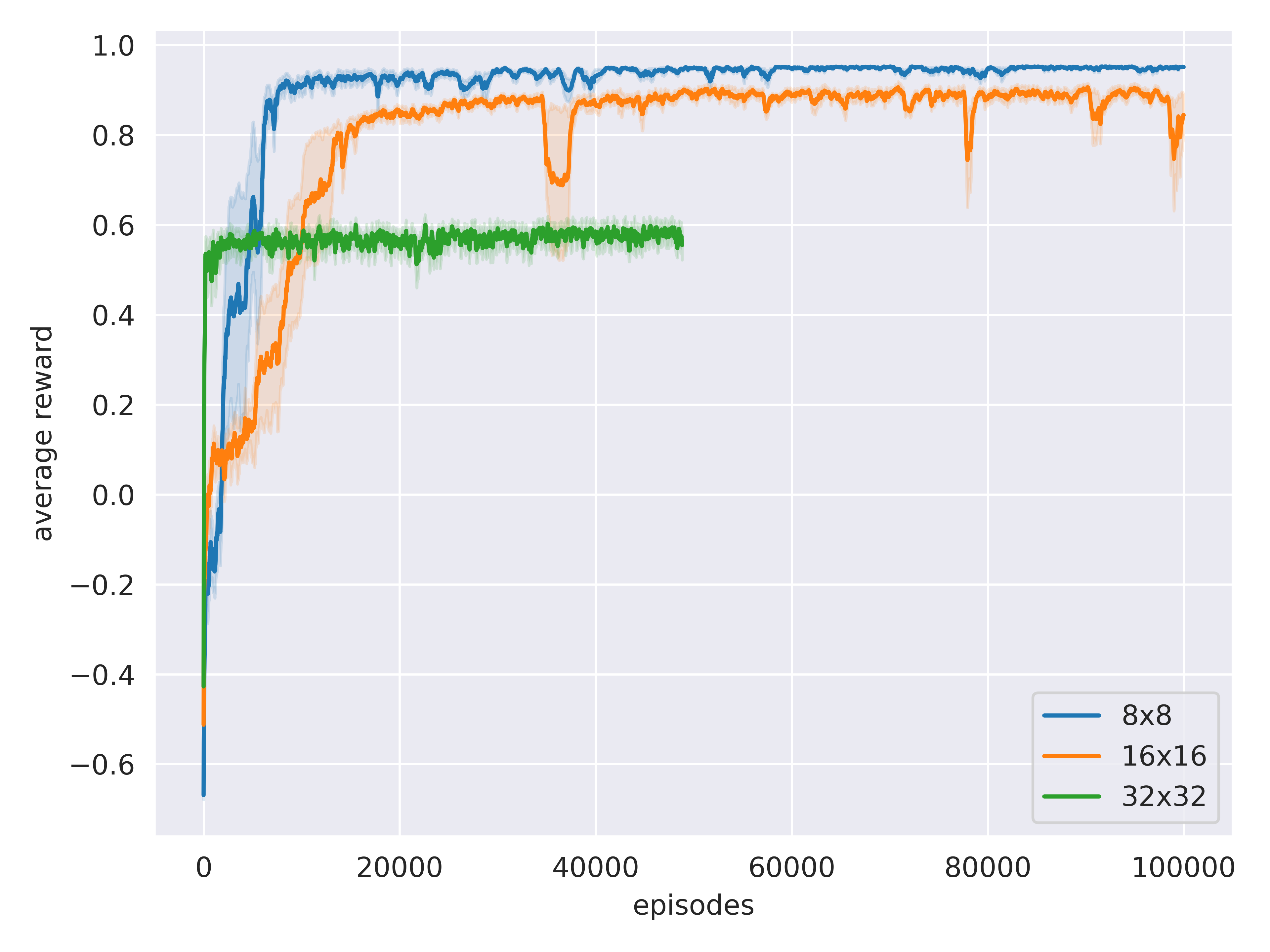}
        \caption{Return on Grid-Verse of different sizes: 8x8 (lr=1e-3, number of
        particles=30), 16x16 (lr=1e-3, number of particles=30), 32x32 (lr=1e-3,
        number of particles=15).}
        \label{fig:grid}
    \end{subfigure}
\end{figure}

In Table \ref{tab:episode-compare}, we compare the number of episodes that
DPFRL and BADDR took to achieve the same performance.

\begin{table*}[htbp]
    \centering
    \begin{tabular}{l c c c}
        \toprule
        \textbf{Domain}     & \textbf{\# Episodes for DFPRL} & \textbf{\# Episodes for BADDr} & \textbf{Return Achieved} \\
        \midrule
        Tiger               & 20k                            & 400                            & 2.2                      \\
        Collision-Avoidance & 0.6M                           & 500                            & -102                     \\
        3-lane Road-Race    & 20k                            & 200                            & 45.2                     \\
        9-lane Road-Race    & 8k                             & 200                            & 52.3                     \\
        Grid-Verse (8x8)    & 5.95k                          & 50                             & 0.8                      \\
        Grid-Verse (16x16)  & 13k                            & 50                             & 0.62                     \\
        Grid-Verse (32x32)  & N/A                            & 50                             & 0.62                     \\
        \bottomrule
    \end{tabular}
    \vspace{0.1in}
    \caption{Comparison between the number of episodes needed for DPFRL and
        BADDr to achieve the same return.}
    \label{tab:episode-compare}
\end{table*}

}

\end{document}